\title{Multi-Armed Bandits with Metric Movement Costs}
\author[1]{Tomer Koren}
\author[2]{Roi Livni}
\author[3]{Yishay Mansour}
\affil[1]{Google; \texttt{\small tkoren@google.com}}
\affil[2]{Princeton University; \texttt{\small rlivni@cs.princeton.edu}}
\affil[3]{Tel Aviv University and Google Research; \texttt{\small mansour@cs.tau.ac.il}}
    \renewcommand*{\bm}[1]{#1}%
\crefname{myalgorithm}{Algorithm}{Algorithms} 
\newcommand{\wrapalgo}[2][0.9\linewidth]
{%
\begin{center}\setlength{\fboxsep}{5pt}\fbox{\begin{minipage}{#1}
#2
\end{minipage}}\end{center}
\vspace{-0.25cm}
}
\newcommand{\ignore}[1]{}
\theoremstyle{plain}
\newtheorem{theorem}{Theorem}
\newtheorem{lemma}[theorem]{Lemma}
\newtheorem{corollary}[theorem]{Corollary}
\newtheorem*{theorem*}{Theorem}
\newtheorem*{lemma*}{Lemma}
\newtheorem*{corollary*}{Corollary}
\newtheorem*{proposition*}{Proposition}
\newtheorem*{claim*}{Claim}
\newtheorem*{fact*}{Fact}
\newtheorem*{observation*}{Observation}
\theoremstyle{definition}
\newtheorem{definition}[theorem]{Definition}
\newtheorem*{definition*}{Definition}
\newtheorem*{remark*}{Remark}
\newtheorem*{example*}{Example}
 \theoremstyle{plain}
\newtheorem*{theoremaux}{\theoremauxref}
\gdef\theoremauxref{1}
 \newcommand{\thmref}[1]{Theorem~\ref{#1}}
\DeclareMathAlphabet{\mathbfsf}{\encodingdefault}{\sfdefault}{bx}{n}
\let\Pr\relax
\DeclareMathOperator{\Pr}{\mathbb{P}}
\newcommand{\mycases}[4]{{
\left\{
\begin{array}{ll}
    {#1} & {\;\text{#2}} \\[1ex]
    {#3} & {\;\text{#4}}
\end{array}
\right. }}
\newcommand{\lr}[1]{\mathopen{}\left(#1\right)}
\newcommand{\Lr}[1]{\mathopen{}\big(#1\big)}
\newcommand{\LR}[1]{\mathopen{}\Big(#1\Big)}
\newcommand{\lrbra}[1]{\mathopen{}\left[#1\right]}
\newcommand{\set}[1]{\{#1\}}
\newcommand{\lrset}[1]{\mathopen{}\left\{#1\right\}}
\newcommand{\Lrset}[1]{\mathopen{}\big\{#1\big\}}
\newcommand{\abs}[1]{|#1|}
\newcommand{\Lrabs}[1]{\mathopen{}\big|#1\big|}
\newcommand{\indsymb}{\mathds{1}\!}
\newcommand{\ind}[1]{\indsymb\set{#1}}
\newcommand{\distance}{\Delta}
\newcommand{\wt}[1]{{\widetilde{#1}}}
\renewcommand{\O}{O}
\newcommand{\tO}{\wt{\O}}
\newcommand{\E}{\mathbb{E}}
\newcommand{\EE}[2][]{\E_{#1}\lrbra{#2}}
\newcommand{\reals}{\mathbb{R}}
\newcommand{\eps}{\epsilon}
\newcommand{\sig}{\sigma}
\let\oldtfrac\tfrac
\renewcommand{\tfrac}[2]{\smash{\oldtfrac{#1}{#2}}}
\let\nablaold\nabla
\renewcommand{\nabla}{\nablaold\mkern-2mu}
\newcommand{\regret}{\textrm{Regret}}
\newcommand{\mregret}{\textrm{Regret}_\mathsf{MC}}
\newcommand{\klm}{\textbf{EXP3MV}}
\def\expth{\ensuremath{\textsc{Exp3}}\xspace}
\def\klm{\ensuremath{\textsc{SMB}}\xspace}
\setlist{leftmargin=1cm}
\newcommand{\mult}{\!\cdot\!}
\newcommand{\tell}{\wt{\ell}}
\newcommand{\bell}{\bm\bar{\ell}}
\newcommand{\B}{B}
\newcommand{\cB}{\mathcal{B}}
\newcommand{\cC}{\mathcal{C}}
\newcommand{\cD}{\mathcal{D}}
\newcommand{\cT}{\mathcal{T}}
\newcommand{\LCA}{\mathrm{LCA}}
\newcommand{\level}{\mathrm{level}}
\newcommand{\depth}{\mathrm{depth}}
\newcommand{\packing}{N^\mathrm{p}}
\newcommand{\covering}{N^\mathrm{c}}
\newcommand{\upper}{covering~complexity\xspace}
\newcommand{\low}{packing~complexity\xspace}
\newcommand{\upperlow}{complexity\xspace}
\newcommand{\dimu}{\cC_{\smash{\textrm{c}}}}
\newcommand{\diml}{\cC_{\smash{\textrm{p}}}}
\renewcommand{\dim}{\cC}
\newcommand{\good}{well-behaved\xspace}
\def\expth{\ensuremath{\textsc{Exp3}}\xspace}
\def\klm{\ensuremath{\textsc{SMB}}\xspace}
\newcommand{\dist}{\Delta}
\newcommand{\distT}{\Delta_\cT}
\newcommand{\pdim}{\smash{\underline{\cD}}}
\newcommand{\cdim}{\smash{\overline{\cD}}}
\begin{document}

\maketitle

\begin{abstract}%
We consider the non-stochastic Multi-Armed Bandit problem in a setting where there is a
fixed and known metric on the action space that determines a cost for switching
between any pair of actions. The loss of the online learner has two
components: the first is the usual loss of the selected actions, and
the second is an additional loss due to switching between actions.
Our main contribution gives a tight characterization of the
expected minimax regret in this setting, in terms of a complexity
measure~$\dim$ of the underlying metric which depends on its
covering numbers.
In finite metric spaces with $k$ actions, we give an efficient
algorithm that achieves regret of the form
$\smash{\tO(\max\set{\dim^{1/3}T^{2/3},\sqrt{kT}})}$, and show that
this is the best possible. Our regret bound generalizes previous
known regret bounds for some special cases: (i)~the unit-switching
cost regret
$\smash{\wt{\Theta}(\max\set{k^{1/3}T^{2/3},\sqrt{kT}})}$ where
$\dim=\Theta(k)$, and (ii) the interval metric with regret
$\smash{\wt{\Theta}(\max\set{T^{2/3},\sqrt{kT}})}$ where
$\dim=\Theta(1)$. For infinite metrics spaces with Lipschitz loss
functions, we derive a tight regret bound of
$\smash{\wt{\Theta}(T^{\frac{d+1}{d+2}})}$ where $d\ge 1$ is the
Minkowski dimension of the space, which is known to be tight even
when there are no switching costs.
\end{abstract}

\section{Introduction}

Multi-Armed Bandit (MAB) is perhaps one of the most well studied
model for learning that allows to incorporate settings with limited
feedback. In its simplest form, MAB can be thought of as a game
between a learner and an adversary: At first, the adversary chooses
an arbitrary sequence of losses $\ell_1,\ldots,\ell_T$ (possibly
adversarially). Then, at each round the learner chooses an action
$i_t$ from a finite set of actions~$K$. At the end of each round,
the learner gets to observe her loss $\ell_t(i_t)$, and \emph{only}
the loss of her chosen action. The objective of the learner is to
minimize her (external) regret, defined as the expected difference
between her loss, $\smash{\sum_{t=1}^T \ell_t(i_t)}$, and the loss
of the best action in hindsight, i.e., $\smash{\min_{i \in K}
\sum_{t=1}^T \ell_t(i)}$.

One simplification of the MAB is that it assumes that the learner
can switch between actions without any cost, this is in contrast to
online algorithms that maintain a state and have a cost of switching
between states.
One simple intermediate solution is to
add further costs to the learner that penalize \emph{movements
between actions}. (Since we compare the learner to the single best
action, the adversary has no movement and hence no movement cost.)
This approach has been studied in the MAB with unit switching costs
\citep{arora2012online,dekel2014bandits}, where the learner is not
only penalized for her loss but also pays a unit cost for any time
she switches between actions. This simple penalty implicitly
advocates the construction of algorithms that avoid frequent
fluctuation in their decisions.
Regulating switching has been successfully applied to many
interesting instances such as buffering problems
\citep{geulen2010regret}, limited-delay lossy coding
\citep{gyorgy2014near} and dynamic pricing with patient buyers
\citep{feldman2016online}.

The unit switching cost assumes that any pair of actions have
the same cost, which in many scenarios is far from true. For
example, consider an ice-cream vendor on a beach, where his actions
are to select a location and price. Clearly, changing location comes
at a cost, while changing prices might come with no cost. In this
case we can define a interval metric (the coast line) and the
movement cost is the distance. A more involved case is a hot-dog
vendor in Manhattan, which needs to select a location and price.
Again, it makes sense to charge a switching cost between locations
according to their distance, and in this case the Manhattan-distance
seems the most appropriate. Such settings are at the core of our
model for MAB with movement cost.
The authors of \cite{koren2017bandits} considered a MAB problem equipped with an interval metric,
i.e, the actions are $[0,1]$ and the movement cost is the distance
between the actions.
They proposed a new online algorithm, called the
Slowly Moving Bandit~(SMB) algorithm, that achieves optimal regret
bound for this setting, and applied it to a dynamic pricing problem with
patient buyers to achieve a new tight regret bound.

The objective of this paper is to handle general metric spaces, both
finite and infinite.
We show how to generalize the SMB algorithm and its analysis to
design optimal moving-cost algorithms for \emph{any} metric space
over finite decision space.
Our main result identifies an intrinsic complexity measure of the
metric space, which we call the \emph{covering/packing complexity},
and give a tight characterization of the expected movement regret in
terms of the complexity of the underlying metric. In particular, in
finite metric spaces of complexity $\dim$ with $k$ actions, we give
a regret bound of the form
$\smash{\tO(\max\set{\dim^{1/3}T^{2/3},\sqrt{kT}})}$ and present an
efficient algorithm that achieves it. We also give a matching
$\smash{\wt{\Omega}(\max\set{\dim^{1/3}T^{2/3},\sqrt{kT}})}$ lower
bound that applies to \emph{any} metric with complexity $\dim$.

We extend out results to general continuous metric spaces. For such
a settings we clearly have to make some assumption about the losses,
and we make the rather standard assumption that the losses are
Lipchitz with respect to the underlying metric.
In this setting our results depend on a quite different complexity
measures: the upper and lower Minkowski dimensions of the space,
thus exhibiting a phase transition between the finite case (that
corresponds to Minkowski dimension zero) and the infinite case.
Specifically, we give an upper bound on the regret of
$\smash{\wt{\O}(T^{\frac{d+1}{d+2}})}$ where $d \ge 1$ is the
\emph{upper} Minkowski dimension. When the upper and lower Minkowski
dimensions coincide---which is the case in many natural spaces, such
as normed vector spaces---the latter bound matches a lower bound of
\cite{bubeck2011x} that holds even when there are no switching
costs. Thus, a surprising implication of our result is that in
infinite actions spaces (of bounded Minkowski dimension), adding
movement costs do not add to the complexity of the MAB problem!

Our approach extends the techniques of \cite{koren2017bandits} for
the SMB algorithm, which was designed to optimize over an interval
metric, which is equivalent to a complete binary Hierarchally
well-Separated Tree (HST) metric space. By carefully balancing and
regulating its sampling distributions, the SMB algorithm avoids
switching between far-apart nodes in the tree and possibly incurring
large movement costs with respect to the associated metric.
We show that the SMB regret guarantees are much more general than
just binary balanced trees, and give an analysis of the SMB
algorithm when applied to general HSTs. As a second step, we show
that a rich class of trees, on which the SMB algorithm can be
applied, can be used to upper-bound any general metric. Finally, we
reduce the case of an infinite metric space to the finite case via
simple discretization, and show that this reduction gives rise to
the Minkowski dimension as a natural complexity measure.
All of these contractions turn out to be optimal (up to
logarithmic factors), as demonstrated by our matching lower bounds.

\subsection{Related Work}

Perhaps the most well known classical algorithm for non-stochastic
bandit is the \expth Algorithm \citep{auer2002nonstochastic} that
guarantee a regret of $\smash{\tO(\sqrt{kT})}$ without movement
costs. However, for general MAB algorithms there are no guarantees
for slow movement between actions. In fact, it is known that in a
worst case $\smash{\wt{\Omega}(T)}$ switches between actions are expected
(see \cite{dekel2014bandits}).

A simple case of MAB with movement cost is the uniform metric, i.e.,
when the distance between any two actions is the same. This setting
has seen intensive study, both in terms of analyzing optimal regret
rates \citep{arora2012online,dekel2014bandits}, as well as
applications \citep{geulen2010regret, gyorgy2014near,
feldman2016online}. Our main technical tools for achieving lower
bounds is through the lower bound of \citet{dekel2014bandits} that
achieve such bound for this special case.
The general problem of bandits with movement costs has been first
introduced in \cite{koren2017bandits}, where the authors gave an
efficient algorithm for a $2$-HST binary balanced tree metric, as
well as for evenly spaced points on the interval. The main
contribution of this paper is a generalization of these results to
general metric spaces.

There is a vast and vigorous study of MAB in continuous spaces
\citep{Kleinberg04,cope2009regret,auer2007improved,bubeck2011x,yu2011unimodal}.
These works relate the change in the payoff to the change in the
action. Specifically, there has been a vast research on Lipschitz
MAB with stochastic payoffs
\citep{kleinberg2008multi,slivkins2011multi,slivkins2013ranked,kleinberg2010sharp,magureanu2014lipschitz},
where, roughly, the expected reward is Lipschitz.
For applying our results in continuous spaces we too need to assume
Lipschitz losses, however,
our metric defines also the movement cost between actions and not
only relates the losses of similar actions.
Our general findings is that in
Euclidean spaces, one can achieve the same regret bounds when
movement cost is applied. Thus, the SMB algorithm can achieve the
optimal regret rate.

One can model our problem as a deterministic Markov Decision Process
(MDP), where the states are the MAB actions and in every state there
is an action to move the MDP to a given state (which correspond
to switching actions). The payoff would be the payoff of the MAB
action associated with the state plus the movement cost to the next
state. The work of \citet{Ortner10} studies deterministic MDP where
the payoffs are stochastic, and also allows for a fixed uniform
switching cost.
The work of \citet{Even-DarKM09} and it extensions
\citep{NeuGSA14,YuMS2009} studies a MDP where the payoffs are
adversarial but there is full information of the payoffs. Latter
this work was extended to the bandit model by \citet{NeuGSA14}. This
line of works imposes various assumptions regarding the MDP and the
benchmark policies, specifically, that the MDP is ``mixing'' and
that the policies considered has full support stationary
distributions, assumptions that clearly fail in our very specific
setting.

Bayesian MAB, such as in the Gittins index (see \cite{Gittins}),
assume that the payoffs are from some stochastic process.
It is known that when there are switching costs then the existence of
an optimal index policy is not guaranteed \citep{BanksS94}. There
have been some works on special cases with a fixed uniform switching cost
\citep{AgrawalHT1988,AsawaT1996}. The most relevant work is that of
\citet{guha2009multi} which for a general metric over the actions
gives a constant approximation off-line algorithm. For a survey of
switching costs in this context see \cite{Jun2004}.

The MAB problem with movement costs is related to the
literature on online algorithms and the competitive analysis
framework \citep{BorodinEl98}. A prototypical online problem is the
Metrical Task System (MTS) presented by \citet{borodin1992optimal}.
In a metrical task system there are a collection of states and a
metric over the states. Similar to MAB, the online algorithm at each
time step moves to a state, incurs a movement cost according to the
metric, and suffers a loss that corresponds to that state. However,
unlike MAB, in an MTS the online algorithm is given the loss prior
to selecting the new state. Furthermore, competitive analysis has a
much more stringent benchmark: the best sequence of actions in
retrospect.
Like most of the regret minimization literature, we use the best
single action in hindsight as a benchmark, aiming for a vanishing
average regret.

One of our main technical tools is an approximation from above of a
metric via a Metric Tree  (i.e., $2$-HST). $k$-HST metrics have been
vastly studied in the online algorithms starting with
\cite{Bartal96}. The main goal is to derive a simpler metric
representation (using randomized trees) that will both upper and
lower bound the given metric. The main result is to show a bound of
$O(\log n)$ on the expected stretch of any edge, and this is also
the best possible \citep{FakcharoenpholRT04}. It is noteworthy that
for bandit learning, and in contrast with these works, an upper
bound over the metric suffices to achieve optimal regret rate. This
is since in online learning we compete against the best
\emph{static} action in hindsight, which does not move at all and
hence has zero movement cost. In contrast, in a MTS, where one
compete against the best \emph{dynamic} sequence of actions, one
needs both an upper a lower bound on the metric.

\section{Problem Setup and Background}

In this section we recall the setting of Multi-armed Bandit with Movement Costs introduced in \cite{koren2017bandits}, and review the necessary background required to state our main results.

\subsection{Multi-armed Bandits with Movement Costs}

In the Multi-armed Bandits (MAB) with Movement Costs problem, we consider a game between an online learner and an adversary continuing for $T$ rounds.
There is a set $K$, possibly infinite, of actions (or ``arms'') that the learner can choose from.
The set of actions is equipped with a fixed and known metric $\dist$ that determines a cost $\dist(i,j) \in [0,1]$ for moving between any pair of actions $i,j \in K$.

Before the game begins, an adversary fixes a sequence $\ell_1,\ldots,\ell_T : K \mapsto [0,1]$ of loss functions assigning loss values in $[0,1]$ to actions in $K$ (in particular, we assume an oblivious adversary).
Then, on each round $t=1,\ldots,T$, the learner picks an
action $i_t \in K$, possibly at random. At the end of each round $t$, the learner gets to observe her loss (namely, $\ell_t(i_t)$) and nothing else.
In contrast with the standard MAB setting, in addition to the loss $\ell_t(i_t)$ the learner suffers an additional cost due to her movement between actions, which is determined by the metric and is equal to $\dist(i_t,i_{t-1})$.
Thus, the total cost at round $t$ is given by $\ell_t(i_t)+\dist(i_{t-1},i_t)$.

The goal of the learner, over the course of $T$ rounds of the game,
is to minimize her expected movement-regret, which is defined as the
difference between her (expected) total costs and the
total costs of the best fixed action in hindsight (that incurs no
movement costs);
namely, the \emph{movement regret} with respect to a sequence $\ell_{1:T}$ of loss vectors and a metric $\dist$ equals
\begin{align*}
\mregret(\ell_{1:T},\dist)
=
\EE{ \sum_{t=1}^T \ell_t(i_t) + \sum_{t=2}^T
\distance(i_t,i_{t-1})} - \min_{i \in K} \sum_{t=1}^T \ell_t(i)
~.
\end{align*}
Here, the expectation is taken with respect to the learner's
randomization in choosing the actions $i_1,\ldots,i_T$;
notice that, as we assume an oblivious adversary, the loss functions $\ell_t$ are deterministic and cannot depend on the learner's randomization.

\subsection{Basic Definitions in Metric Spaces}

We recall basic notions in metric space that govern the regret in the MAB with movement costs setting.
Throughout we assume a bounded metric space $(K,\dist)$, where for normalization we assume $\dist(i,j) \in [0,1]$ for all $i,j \in K$.
Given a point $i \in K$ we will denote by $\B_\eps(i)=\set{j\in K : \dist(i,j)\le \eps}$ the ball of radius $\eps$ around $i$.

The following definitions are standard.

\begin{definition}[Packing numbers]
A subset $P\subset K$ in a metric space $(K,\dist)$ is an \emph{$\eps$-packing} if the sets $\{\B_\eps(i)\}_{i\in P}$ are disjoint sets.
The \emph{$\eps$-packing number} of $\dist$, denoted $\packing_\eps(\dist)$, is the maximum cardinality of any $\eps$-packing of $K$.
\end{definition}

\begin{definition}[Covering numbers]
A subset $C\subset K$ in a metric space $(K,\dist)$ is an
\emph{$\eps$-covering} if $K \subseteq \cup_{i\in C} \B_\eps(i)$.
The \emph{$\eps$-covering number} of $K$, denoted
$\covering_\eps(\dist)$, is the minimum cardinality of any
$\eps$-covering of $K$.
\end{definition}

\paragraph{Tree metrics and HSTs.}

We recall the notion of a tree metric, and in particular, a metric
induced by an Hierarchically well-Separated (HST) Tree; see
\cite{Bartal96} for more details. Any weighted tree defines a
metric over the vertices, by considering the shortest path between
each two nodes. An HST tree ($2$-HST tree, to be precise) is  a
rooted weighted tree such that: 1) the edge weight from any node to
each of its children is the same and 2) the edge weight along any
path from the root to a leaf are decreasing by a factor $2$ per edge.
We will also assume that all leaves are of the same depth in the
tree (this does not imply that the tree is complete).

Given a tree $\cT$ we let $\depth(\cT)$ denote its height, which is the maximal length of a path from any leaf to the root.
Let $\level(v)$ be the level of a node $v \in \cT$, where the level of the leaves is $0$ and the level of the root is $\depth(\cT)$.
Given nodes $u,v \in \cT$, let $\LCA(u,v)$ be their least common ancestor node in~$\cT$.

The metric which we next define is equivalent (up to a constant factor) to standard tree--metric induced over the leaves by an HST. By a slight abuse of terminology, we will call it HST metric:

\begin{definition}[HST metric]
Let $K$ be a finite set and let $\cT$ be a tree whose leaves are at the same depth and are indexed by elements of $K$.
Then the HST metric $\distT$ over $K$ induced by the tree $\cT$ is defined as follows:
\begin{align*} 
\distT(i,j)
=
\frac{2^{\level(\LCA(i,j))}}{2^{\depth(\cT)}}
\qquad
\forall ~ i,j \in K
.
\end{align*}
\end{definition}

For a HST metric $\distT$, observe that the packing number and
covering number are simple to characterize: for all $0 \le h <
\depth(\cT)$ we have that for $\eps = 2^{h-H}$,
$$
\covering_\eps(\distT) 
= 
\packing_\eps(\distT) 
= 
\Lrabs{\set{v \in \cT : \level(v) = h}}
.
$$

\paragraph{Complexity measures for finite metric spaces.}

We next define the two notions of complexity that, as we will later see, governs the complexity of MAB with metric movement costs.

\begin{definition}[\upper]
The \upper of a metric space $(K,\dist)$ denoted $\dimu(\dist)$ is given by
\[
\dimu(\dist)=\sup_{0<\epsilon<  1} \, \eps \mult \covering_\eps(\dist).
\]
\end{definition}

\begin{definition}[\low]
The \low of a metric space $(K,\dist)$ denoted $\diml(\dist)$ is given by
\[
\diml (\dist)=\sup_{0<\epsilon < 1} \, \eps \mult \packing_\eps(\dist).
\]
\end{definition}

For a HST metric, the two complexity measures coincide as its packing and covering numbers are the same.
Therefore, for a HST metric $\distT$ we will simply denote the complexity of $(K,\distT)$ as $\dim(\cT)$.
In fact, it is known that in any metric space $\packing_\eps(\dist)\le\covering_\eps(\dist)\le \packing_{\smash{\eps/2}}(\dist)$ for all $\eps>0$.
Thus, for a general metric space we obtain that
\begin{align}
\label{eq:equiv}
\diml(\dist)
\le
\dimu(\dist)
\le
2\diml(\dist).
\end{align}

\paragraph{Complexity measures for infinite metric spaces.}

For infinite metric spaces, we require the following definition.

\begin{definition}[Minkowski dimensions]
Let $(K,\dist)$ be a bounded metric space.
The upper Minkowski dimension of $(K,\dist)$, denoted $\cdim(\dist)$, is defined as
\begin{align*}
\cdim(\dist)
=
\limsup_{\eps \to 0} \frac{\log{\packing_\eps(\dist)}}{\log(1/\eps)}
=
\limsup_{\eps \to 0} \frac{\log{\covering_\eps(\dist)}}{\log(1/\eps)}
.
\end{align*}
Similarly, the lower Minkowski dimension is denoted by $\pdim(\dist)$ and is defined as
\begin{align*}
\pdim(\dist)
=
\liminf_{\eps \to 0} \frac{\log{\packing_\eps(\dist)}}{\log(1/\eps)}
=
\liminf_{\eps \to 0} \frac{\log{\covering_\eps(\dist)}}{\log(1/\eps)}
.
\end{align*}
\end{definition}
\ignore{
Note that  if $d = \cdim(\dist)$ then there exists a constant $C>0$ such that $\covering_\eps(\dist) \ge C \eps^{-d}$ for any $\eps>0$;
similarly, if $d' = \pdim(\dist)$ then for some constant $C'>0$ it holds that $\covering_\eps(\dist) \le C' \eps^{-d'}$ for any $\eps>0$.
}
We refer to \cite{tao2009minkowski} for more background on the Minkowski dimensions and related notions in metric spaces theory.

\section{Main Results}

We now state the main results of the paper, which give a complete characterization of the expected regret in the MAB with movement costs problem.

\subsection{Finite Metric Spaces} \label{sec:results-finite}

The following are the main results of the paper.
Detailed proofs are provided in \cref{ap:proofs}.

\begin{theorem}[Upper Bound] \label{thm:upper}
Let $(K,\dist)$ be a finite metric space over $\abs{K} = k$ elements with diameter $\le 1$ and \upper $\dimu = \dimu(\dist)$.
There exists an algorithm such  that for any sequence of loss functions $\ell_1,\ldots,\ell_T$ guarantees that
\begin{align*}
\mregret(\ell_{1:T},\dist)
=
\tO\Lr{ \max\Lrset{ \dimu^{1/3} T^{2/3}, \sqrt{kT} } }
.
\end{align*}
\end{theorem}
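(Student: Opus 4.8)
The plan is a three-step reduction: (1) analyze the SMB algorithm on a general HST (not just the balanced binary one of \cite{koren2017bandits}), (2) show that any finite metric can be dominated by a well-chosen HST whose complexity is comparable to $\dimu(\dist)$, and (3) run SMB on that dominating tree, arguing that since the tree metric upper-bounds $\dist$, the movement cost measured by $\dist$ is never larger than the one measured by the tree, while the loss part is unchanged.

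First I would set up the HST analysis. Given an HST metric $\distT$ with depth $H$, I think of SMB as maintaining, at each level $h$, a distribution that is "slowly moving" in the sense that the probability of switching the subtree at level $h$ is controlled. The key quantity is that the expected number of switches at level $h$ over $T$ rounds is roughly $\tO(\sqrt{N_h T} / \text{something})$ where $N_h = \abs{\set{v\in\cT:\level(v)=h}}$ is the number of level-$h$ nodes; each such switch costs $2^{h-H}$ in the tree metric. Summing the movement cost over levels, and separately bounding the bandit loss by the standard $\tO(\sqrt{kT})$ term, one gets a bound of the form $\tO(\sqrt{kT} + \sum_h 2^{h-H}\sqrt{N_h T})$. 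The telescoping/summation over $h$ should be controllable by $\eps N_\eps \le \dim(\cT)$ for every scale $\eps = 2^{h-H}$, giving a per-level contribution of $\tO(\sqrt{\dim(\cT)\, T \cdot 2^{h-H}})$, and the geometric sum over $h$ collapses to $\tO(\sqrt{\dim(\cT)\,T})$ times a logarithmic factor — but this is the wrong shape; to get the $T^{2/3}$ behavior one must instead truncate the tree at an optimized depth (equivalently, discretize the metric at scale $\eps \approx (\dimu/T)^{1/3}$), pay $\eps T$ for collapsing everything below that scale into a single point, and balance. Optimizing $\eps$ yields the $\dimu^{1/3}T^{2/3}$ term, and taking the max with the unavoidable $\sqrt{kT}$ bandit term gives the claimed bound.

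Second, for the tree-domination step, I would invoke the standard construction of a $2$-HST on top of the $\eps$-net hierarchy of $(K,\dist)$: take a chain of $\eps$-coverings at scales $1, \tfrac12, \tfrac14,\ldots$ and build a tree whose level-$h$ nodes are the covering centers at scale $2^{h-H}$, with a leaf for each of the $k$ actions. Because covering numbers satisfy $\eps\covering_\eps(\dist)\le\dimu(\dist)$, the resulting tree has $\dim(\cT) = \tO(\dimu(\dist))$ at every scale (up to the constant-factor slack between packing and covering recorded in \eqref{eq:equiv}), and — crucially — the induced HST metric satisfies $\distT(i,j) \ge \dist(i,j)$ for all $i,j$, possibly after a constant rescaling, since two points at distance $r$ cannot share an $\eps$-covering center for $\eps < r/2$. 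This is exactly the "upper bound over the metric suffices" phenomenon emphasized in the introduction.

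The main obstacle I expect is the general-HST analysis of SMB in Step 1: the original analysis is tailored to the complete binary balanced tree, where every level has a clean uniform branching structure, and one must check that the slow-movement argument and the attendant regret decomposition survive when branching factors vary across nodes and levels and when leaves sit at the same depth but the tree is not complete. Getting the right dependence — $N_h$ entering as $\sqrt{N_h}$ rather than $N_h$, and the scale factors $2^{h-H}$ combining correctly with the optimized truncation depth — is where the real work lies; the metric-domination step and the final $\eps$-optimization are comparatively routine once the HST regret bound is in hand.
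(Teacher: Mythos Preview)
Your three-step plan (SMB on a general HST, dominate an arbitrary finite metric by an HST with comparable complexity, run SMB on the dominating tree) is exactly the paper's route, and Steps~2--3 are essentially what the paper does in \cref{lem:main2} and the final assembly. The issue is in your Step~1 sketch, which misidentifies the shape of the SMB bound and, as a consequence, proposes the wrong balancing mechanism.

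First, the SMB regret on a depth-$H$ HST does \emph{not} decompose as ``$\sqrt{kT}$ for the loss part plus a per-level movement term $\sum_h 2^{h-H}\sqrt{N_h T}$.'' The actual bound (\cref{lem:smb-tree}) is
\[
\tO\Lr{H\sqrt{2^H\,\dim(\cT)\,T}} \;+\; \tO\Lr{H\,2^{-H}\,T},
\]
where the first term is the \emph{loss} regret (the variance of the SMB estimator scales with $2^H\dim$, not with $k$) and the second is the \emph{total} movement cost (SMB switches its level-$h$ subtree with probability $\le 2^{-(h+1)}$ per round, independently of $N_h$, so the per-round movement is $\sum_h 2^{h-H}\cdot 2^{-(h+1)}=\Theta(H2^{-H})$). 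Both terms depend on $H$, and it is their balance---not a loss-vs-movement split of the kind you wrote---that drives the optimization.

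Second, and more importantly, ``collapsing everything below scale $\eps$ into a single point and paying $\eps T$'' does not work here: the losses are \emph{arbitrary}, not Lipschitz, so merging nearby arms can cost $\Theta(T)$ against the best arm, not $\eps T$. The paper never merges arms. Instead (\cref{lem:tree}) it \emph{reshapes the tree while keeping all $k$ leaves distinct}: it deepens by hanging a single-child chain below each leaf, or shallows by reconnecting leaves to their grandparents. Both operations preserve (or only increase) $\distT$ and leave $\dim(\cT)$ unchanged, and they are iterated until $H$ sits at the balance point $2^{-H}T\asymp\sqrt{2^H\dim\,T}$, i.e.\ $2^H\asymp(T/\dim)^{1/3}$, yielding $\dim^{1/3}T^{2/3}$. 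The $\sqrt{kT}$ term is not an a priori bandit bound you take the max with; it is the regime where the shallowing process hits $2^H\dim\le k$ before reaching the balance point, at which stage the first term above is already $\tO(\sqrt{kT})$.
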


\begin{theorem}[Lower Bound] \label{thm:lower}
Let $(K,\dist)$ be a finite metric space over $\abs{K} = k$ elements with diameter $\ge 1$ and \low $\diml = \diml(\dist)$. 
For any algorithm there exists a sequence $\ell_1,\ldots, \ell_{T}$ of loss functions such that
\begin{align*}
\mregret(\ell_{1:T},\dist)
=
\wt{\Omega}\Lr{ \max\Lrset{ \diml^{1/3} T^{2/3}, \sqrt{kT} } }
.
\end{align*}
\end{theorem}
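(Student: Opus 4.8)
The plan is to establish the two terms of the lower bound separately; since for nonnegative $a,b$ one has $\max\{a,b\}\le a+b\le 2\max\{a,b\}$, it suffices to exhibit, for each term, a loss sequence forcing regret at least that term up to constants. The $\wt{\Omega}(\sqrt{kT})$ term will come from the classical non-stochastic MAB lower bound and uses no metric structure at all; the $\wt{\Omega}(\diml^{1/3}T^{2/3})$ term will be obtained by reducing to an $N$-armed bandit with a uniform switching cost of magnitude $\eps$, for a suitably chosen packing scale $\eps$ and $N=\packing_\eps(\dist)$, and then invoking a rescaled version of the switching-cost lower bound of \citet{dekel2014bandits}.

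\emph{The $\sqrt{kT}$ term.} Because the movement term $\sum_{t\ge 2}\dist(i_t,i_{t-1})$ is nonnegative, $\mregret(\ell_{1:T},\dist)$ is at least the ordinary (movement-free) regret of the same algorithm on the same losses. Choosing $\ell_t$ to ignore the metric entirely and to follow the standard stochastic hard instance on $k$ arms (all arms i.i.d.\ $\mathrm{Bernoulli}(\thalf)$ except a uniformly random ``good'' arm of bias $\thalf-c\sqrt{k/T}$) yields $\mregret(\ell_{1:T},\dist)=\Omega(\sqrt{kT})$ by the classical bandit lower bound \citep{auer2002nonstochastic}; the diameter assumption plays no role here.

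\emph{The $\diml^{1/3}T^{2/3}$ term.} First I extract a well-separated subset of actions. By definition of the packing complexity there is a scale $\eps\in(0,1)$ with $\eps\cdot\packing_\eps(\dist)\ge\thalf\diml$, and the assumption $\mathrm{diam}\ge 1$ guarantees this $\eps$ may be taken so that $N:=\packing_\eps(\dist)\ge 2$ (two points at distance $1$ already force $\packing_\eps\ge 2$ whenever $\eps<\thalf$). Fix an $\eps$-packing $P=\{j_1,\dots,j_N\}\subseteq K$. Since the balls $\B_\eps(j_r)$ are pairwise disjoint we have $\dist(j_r,j_s)>\eps$ for all $r\ne s$, and hence, by the triangle inequality, \emph{any} trajectory in $K$ that starts at $j_r$ and ends at some $j_s\ne j_r$ incurs total movement cost strictly more than $\eps$. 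I then define the loss sequence to assign loss $1$ to every action outside $P$ at every round, while on $P$ it follows the multi-scale random-walk construction of \citet{dekel2014bandits} for an $N$-armed instance, with the per-round loss gap $\delta$ set, up to logarithmic factors, to $\delta\asymp(N\eps/T)^{1/3}$, i.e.\ chosen to balance the exploration regret against $\eps$ times the number of switches. Now the reduction runs as follows: if the algorithm plays outside $P$ on at least $c\,\diml^{1/3}T^{2/3}$ rounds it already incurs regret $\Omega(\diml^{1/3}T^{2/3})$, since each such round costs roughly $\thalf$ more than the per-round loss of the best action of $P$; so we may assume it plays in $P$ on all but at most $c\,\diml^{1/3}T^{2/3}$ rounds. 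On those rounds it receives exactly the feedback of the $N$-armed instance, and by the separation property its cumulative movement cost is at least $\eps$ times the number of times it changes its active packing point; thus it acts exactly as a learner in an $N$-armed bandit with switching cost $\eps$ per switch. Re-running the argument of \citet{dekel2014bandits} with the switching cost $\eps$ carried through the final parameter optimization — which turns their $k^{1/3}T^{2/3}$ into $(N\eps)^{1/3}T^{2/3}$ — shows this learner must suffer regret $\wt{\Omega}((N\eps)^{1/3}T^{2/3})=\wt{\Omega}(\diml^{1/3}T^{2/3})$. Combining with the $\sqrt{kT}$ bound proves the theorem.

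The main obstacle is the rescaled switching-cost lower bound: one has to verify that the multi-scale random walk and the information-theoretic heart of \citet{dekel2014bandits}'s argument are unaffected by jointly rescaling the loss gap and the switching cost, and that the ensuing tradeoff between exploration regret and $\eps$ times the number of switches is indeed optimized at $(N\eps)^{1/3}T^{2/3}$, losing only polylogarithmic factors. Two secondary points also need care: (i) ruling out that the learner profits from the fine metric structure around $P$ — handled by the triangle inequality, which makes every detour at least as expensive as a direct switch, together with the loss-$1$ penalty off $P$; and (ii) the boundary regimes (e.g.\ $N$ bounded by a constant, or $T$ of order $k$), where the bound either degenerates to the unavoidable $\wt{\Omega}(T^{2/3})$ or is dominated by $\wt{\Omega}(\sqrt{kT})$.
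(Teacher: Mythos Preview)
Your proposal follows essentially the same route as the paper's own proof sketch: the $\sqrt{kT}$ term from the classical MAB lower bound, and the $\diml^{1/3}T^{2/3}$ term by restricting to an $\eps$-packing $P$ of size $N=\packing_\eps(\dist)$, assigning loss $1$ off $P$, and invoking the rescaled unit-switching-cost lower bound of \citet{dekel2014bandits} (which the paper states as a black box, \cref{thm:koren}, with constant $c$ in place of your $\eps$). Your write-up is in fact more careful than the paper's sketch on two points---the triangle-inequality argument showing that detours through $K\setminus P$ cannot lower movement cost below $\eps$ per effective switch, and the use of the diameter hypothesis to guarantee $N\ge 2$---so there is nothing to correct.
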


Recalling \cref{eq:equiv}, we see that the regret bounds obtained in \cref{thm:upper,thm:lower} are matching up to logarithmic factors.
Notice that the tightness is achieved \emph{per instance}; namely, for any given metric we are able to fully characterize the regret's rate of growth as a function of the intrinsic properties of the metric. (In particular, this is substantially stronger than demonstrating a specific metric for which the upper bound cannot be improved.)
Note that for the lower bound statement in \cref{thm:lower} we require that the diameter of $K$ is bounded away from zero, where for simplicity we assume a constant bound of $1$. Such an assumption is necessary to avoid degenerate metrics. Indeed, when the diameter is very small, the problem reduces to the standard MAB setting without any additional costs and we obtain a regret rate of $\Omega(\sqrt{kT})$.

Notice how the above results extend known instances of the problem from previous work: for uniform movement costs (i.e., unit switching costs) over $K=\set{1,\ldots,k}$ we have $\dimu=\Theta(k)$, so that the obtain bound is
$\smash{\wt{\Theta}(\max\set{k^{1/3} T^{2/3}, \sqrt{kT}})},$
which recovers the results in \cite{arora2012online,dekel2014bandits};
and for a $2$-HST binary balanced tree with $k$ leaves, we have $\dimu=\Theta(1)$ and the resulting bound is
$\smash{\wt{\Theta}(\max\set{T^{2/3}, \sqrt{kT}})},$
which is identical to the bound proved in \cite{koren2017bandits}.

The $2$-HST regret bound in \cite{koren2017bandits} was primarily used to obtain regret bounds for the action space $K=[0,1]$. 
In the next section we show how this technique is extended for infinite metric space to obtain regret bounds that depend on the dimensionality of the action space.

\subsection{Infinite Metric Spaces} \label{sec:results-infinite}

When $(K,\dist)$ is an infinite metric space, without additional constraints on the loss functions, the problem becomes ill-posed with a linear regret rate, even without movement costs.
Therefore, one has to make additional assumptions on the loss functions in order to achieve sublinear regret.
One natural assumption, which is common in previous work, is to assume that the loss functions $\ell_1,\ldots,\ell_T$ are all $1$-Lipschitz with respect to the metric $\dist$.
Under this assumption, we have the following result.

\begin{theorem} \label{thm:coverdim}
Let $(K,\dist)$ be a metric space with diameter $\le 1$ and upper Minkowski dimension $d = \cdim(\dist)$, such that $d\ge 1$.
There exists a strategy such that for any sequence of loss functions $\ell_1,\ldots,\ell_T$, which are all $1$-Lipschitz with respect to $\dist$, guarantees that
\begin{align*}
\mregret(\ell_{1:T},\dist)
=
\tO\Lr{ T^\frac{d+1}{d+2} }
.
\end{align*}
\end{theorem}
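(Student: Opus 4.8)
\medskip
\noindent\textbf{Proof plan.} The plan is to reduce to the finite case of \cref{thm:upper} by a single static discretization, and then to optimize the resolution of that discretization. Fix a scale $\eps\in(0,1)$ to be chosen later, let $C\subseteq K$ be a minimal $\eps$-covering of $(K,\dist)$ so that $\abs{C}=\covering_\eps(\dist)$, and run the algorithm guaranteed by \cref{thm:upper} on the finite metric space $(C,\dist|_C)$; its diameter is at most $1$, being a sub-metric of $(K,\dist)$. Since the resulting strategy only ever plays actions in $C\subseteq K$, the movement costs it pays in $K$ coincide exactly with those controlled by the finite algorithm. For the loss term, pick $i^\st\in\argmin_{i\in K}\sum_{t}\ell_t(i)$ and $c^\st\in C$ with $\dist(i^\st,c^\st)\le\eps$; since every $\ell_t$ is $1$-Lipschitz, $\sum_t\ell_t(c^\st)\le\sum_t\ell_t(i^\st)+\eps T$, so $\min_{c\in C}\sum_t\ell_t(c)-\min_{i\in K}\sum_t\ell_t(i)\le\eps T$. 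Hence the movement regret over $K$ is at most the movement regret of the finite algorithm over $(C,\dist|_C)$ plus $\eps T$.

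By \cref{thm:upper} the finite part is $\tO(\max\{\dimu(\dist|_C)^{1/3}T^{2/3},\,\sqrt{\abs{C}\,T}\})$, so it remains to estimate $\abs{C}$ and $\dimu(\dist|_C)$ in terms of $\eps$ and $d=\cdim(\dist)$. Since $d=\cdim(\dist)$, for every $\gamma>0$ and all small enough scales $\eta$ we have $\covering_\eta(\dist)\le\eta^{-(d+\gamma)}$; in particular $\abs{C}=\covering_\eps(\dist)\le\eps^{-(d+\gamma)}$. For $\dimu(\dist|_C)=\sup_{\eta<1}\eta\covering_\eta(\dist|_C)$ I would split on the scale: if $\eta\ge 2\eps$ then $\covering_\eta(\dist|_C)\le\covering_{\eta/2}(\dist)$ (replace each center of an optimal $(\eta/2)$-covering of $K$ by a point of $C$ within distance $\eta/2$), whereas if $\eta<2\eps$ then trivially $\covering_\eta(\dist|_C)\le\abs{C}$. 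Using $\covering_{\eta/2}(\dist)\le(\eta/2)^{-(d+\gamma)}$ together with $d\ge1$ (so $\eta\mapsto\eta^{1-d-\gamma}$ is non-increasing, hence maximized at $\eta=2\eps$) gives $\eta\covering_\eta(\dist|_C)=\O(\eps^{1-d-\gamma})$ at every scale below a constant, while scales $\eta=\Theta(1)$ contribute only a constant; altogether $\dimu(\dist|_C)=\tO(\eps^{1-d})$. Substituting, the movement regret over $K$ is $\tO(\eps^{(1-d)/3}T^{2/3}+\eps^{-d/2}T^{1/2}+\eps T)$, and the choice $\eps=T^{-1/(d+2)}$ makes all three terms equal to $T^{(d+1)/(d+2)}$, proving the claim (and a doubling argument removes the dependence on a known horizon $T$).

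I expect the main obstacle to be controlling $\dimu(\dist|_C)$ uniformly over all scales: one must translate covering numbers of the sub-metric $(C,\dist|_C)$ into covering numbers of the ambient space at every resolution, and the $\limsup$ definition of $\cdim$ only yields the clean polynomial bound $\covering_\eta(\dist)\le\eta^{-(d+\gamma)}$ below an unspecified, $\gamma$-dependent threshold, at the cost of an arbitrarily small exponent slack $\gamma$. One therefore has to check that the chosen scale $\eps=T^{-1/(d+2)}$ lies below that threshold for $T$ large, and that the spurious factor $T^{\gamma/(3(d+2))}$ this produces can be absorbed into $\tO(\cdot)$ by letting $\gamma\to0$ slowly as $T\to\infty$. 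The remaining ingredients — the Lipschitz discretization error $\eps T$ and the balancing of the three regret terms — are entirely routine.
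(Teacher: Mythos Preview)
Your proposal is correct and takes essentially the same route as the paper: discretize $(K,\dist)$ to a finite $\eps$-covering, apply \cref{thm:upper} on the induced sub-metric, add the $\eps T$ Lipschitz discretization error, and balance the three resulting terms at $\eps=T^{-1/(d+2)}$. The paper bounds $\dimu$ of the sub-metric slightly differently---it takes a minimal $2\eps$-covering so as to force minimum pairwise distance $\ge\eps$ and then restricts the defining supremum to scales $r\ge\eps$---and it simply asserts $\covering_r(\dist)\le Cr^{-d}$ rather than carrying the explicit slack $\gamma$ you introduce from the $\limsup$, but these are cosmetic differences rather than a different argument.
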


Again, we observe that the above result extend the case of $K=[0,1]$ where $d=1$. 
Indeed, for Lipschitz functions over the interval a tight regret bound of $\smash{\wt\Theta(T^{2/3})}$ was achieved in \cite{koren2017bandits}, which is exactly the bound we obtain above.

We mention that a lower bound of $\smash{\wt\Omega(T^{\smash{\frac{d+1}{d+2}}})}$ is known for MAB in metric spaces with Lipschitz cost functions---even \emph{without movement costs}---where $d = \pdim(\dist)$ is the lower Minkowski dimension.

\begin{theorem}[\citet{bubeck2011x}] \label{thm:coverdimlow}
Let $(K,\dist)$ be a metric space with diameter $\le 1$ and lower Minkowski dimension $d = \pdim(\dist)$, such that $d\ge 1$.
Then for any learning algorithm, there exists a sequence of loss function $\ell_1,\ldots,\ell_T$, which are all $1$-Lipschitz with respect to $\dist$, such that the regret (without movement costs) is
$
\smash{\wt{\Omega}\Lr{ T^\frac{d+1}{d+2} }}
.
$
\end{theorem}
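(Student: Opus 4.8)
The plan is the classical ``planted bump in a packing'' lower bound for Lipschitz bandits (this is essentially the argument of \citet{bubeck2011x}, which may also simply be invoked as a black box). By Yao's principle it suffices to construct a distribution over sequences of $1$-Lipschitz loss functions against which every (deterministic) learner suffers expected regret $\wt\Omega(T^{(d+1)/(d+2)})$.

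First I would fix a resolution $\eps>0$ (to be chosen of order $T^{-1/(d+2)}$ up to a subpolynomial correction) and extract many well-separated actions at that scale. Since $d=\pdim(\dist)$ and $d\ge1$, the definition of the lower Minkowski dimension gives, for any $\delta>0$ and all small enough $\eps$, a set $\{x_1,\dots,x_N\}\subseteq K$ that is a maximum $2\eps$-packing, so the points are pairwise more than $2\eps$ apart and $N=\packing_{2\eps}(\dist)\ge(2\eps)^{-(d-\delta)}$; the balls $\B_\eps(x_1),\dots,\B_\eps(x_N)$ are then pairwise disjoint. Now draw $i^\star$ uniformly from $[N]$ and let $b(x)=\max\{0,\,1-\dist(x,x_{i^\star})/\eps\}$ be the cone centered at $x_{i^\star}$: it equals $1$ at $x_{i^\star}$, vanishes outside $\B_\eps(x_{i^\star})$, and is $(1/\eps)$-Lipschitz. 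Set
\[
\ell_t(x) \eq \tfrac12 + \tfrac14\sigma_t - \gamma\, b(x), \qquad \gamma \eqdef \eps,
\]
where $\sigma_1,\dots,\sigma_T\in\{-1,+1\}$ are i.i.d.\ uniform signs, independent of $i^\star$. Each $\ell_t$ is $1$-Lipschitz (the $\sigma_t$ term is spatially constant and $\gamma b$ is $1$-Lipschitz as $\gamma=\eps$) and takes values in $[0,1]$ once $\eps\le\tfrac14$. Since $\min_i\sum_t\ell_t(i)\le\sum_t\ell_t(x_{i^\star})$ has expectation $T(\tfrac12-\gamma)$, and on any round with $i_t\notin\B_\eps(x_{i^\star})$ one has $b(i_t)=0$ (by the triangle inequality and $2\eps$-separation) so $\E[\ell_t(i_t)]=\tfrac12$, a short computation gives
$\mregret\ge\gamma\big(T-\E[N_{i^\star}]\big)$, where $N_j\eqdef\#\{t:i_t\in\B_\eps(x_j)\}$.

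The information-theoretic core is to show $\E[N_{i^\star}]\le T/2$, i.e.\ no learner can concentrate its plays in the good ball. Observe that a play in a ``wrong'' ball (or outside all balls) returns exactly $\tfrac12+\tfrac14\sigma_t$ and carries no information about $i^\star$, while a play inside $\B_\eps(x_{i^\star})$ differs from this only through a mean shift $\gamma b(i_t)\le\gamma$, hence contributes $O(\gamma^2)$ to the KL divergence. Letting $\P_0$ be the law of the learner's trajectory under the null losses $\tfrac12+\tfrac14\sigma_t$ (no bump), the divergence decomposition yields $\kl{\P_0}{\P_j}= O(\gamma^2)\,\E_0[N_j]$, so by Pinsker $\E_j[N_j]\le\E_0[N_j]+O(T\gamma\sqrt{\E_0[N_j]})$; averaging over $j$ and using $\sum_j\E_0[N_j]\le T$ (disjoint balls) together with Cauchy--Schwarz gives $\E[N_{i^\star}]\le T/N+O(T\gamma\sqrt{T/N})$. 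Under the calibration $\gamma^2 T\le cN$ for a small absolute constant $c$ (and $N\ge4$) this is at most $T/2$, whence $\mregret=\Omega(\gamma T)$.

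It remains to optimize the scale. With $N\asymp\eps^{-(d-\delta)}$ and $\gamma=\eps$, the calibration reads $\eps^{\,d+2-\delta}\lesssim 1/T$, so taking $\eps\asymp T^{-1/(d+2-\delta)}$ yields $\mregret=\Omega(\eps T)=\Omega\big(T^{\frac{d+1-\delta}{d+2-\delta}}\big)$, which is $\wt\Omega\big(T^{\frac{d+1}{d+2}}\big)$ after letting $\delta\to0$ (the $\liminf$-slack and the logarithmic factor entering the KL step are absorbed into the $\wt\Omega$). The step I expect to be the main obstacle is the information-theoretic one: one must set up the construction so that ``a play outside the good ball reveals literally nothing'' and the per-play information inside is $O(\gamma^2)$ \emph{uniformly over the continuum of actions in each ball}, so that the finite-armed KL machinery transfers verbatim; handling the $\liminf$ definition of $\pdim$ cleanly (uniformly in $T$) is the only other technicality, the rest being calibration.
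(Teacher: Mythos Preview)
The paper does not give its own proof of this theorem: it is stated as a cited result of \citet{bubeck2011x} and invoked as a black box (see the sentence preceding the theorem: ``a lower bound of $\wt\Omega(T^{\frac{d+1}{d+2}})$ is known for MAB in metric spaces with Lipschitz cost functions''). So there is nothing in the paper to compare your argument against; your own parenthetical ``which may also simply be invoked as a black box'' is exactly how the paper treats it.

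That said, your sketch is a faithful rendering of the standard packing-plus-planted-bump lower bound for Lipschitz bandits and would go through. One remark on the ``technicality'' you flag at the end: because $d=\pdim(\dist)$ is the \emph{lower} Minkowski dimension, i.e.\ a $\liminf$, the inequality $\packing_\eps(\dist)\ge \eps^{-(d-\delta)}$ in fact holds for \emph{all} sufficiently small $\eps$ (not merely along a subsequence), so there is no uniformity-in-$T$ issue to worry about; your earlier sentence ``for any $\delta>0$ and all small enough $\eps$'' already has this right.
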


In many natural metric spaces in which the upper and lower Minkowski
dimensions coincide (e.g., normed spaces), the bound of
\cref{thm:coverdim} is tight up to logarithmic factors in $T$. In
particular, and quite surprisingly, we see that the movement costs
do not add to the regret of the problem!

It is important to note that \cref{thm:coverdim} holds only for
metric spaces whose (upper) Minkowski dimension is at least $1$.
Indeed, finite metric spaces are of Minkowski dimension zero, and as
we demonstrated in \cref{sec:results-finite} above, a
$\smash{O(\sqrt{T})}$ regret bound is not achievable. Finite matric
spaces are associated with a complexity measure which is very
different from the Minkowski dimension (i.e., the covering/packing
complexity). In other words, we exhibit a phase transition between
dimension $d=0$ and $d\ge1$ in the rate of growth of the regret
induced by the metric.

\section{Algorithms}

In this section we turn to prove \cref{thm:upper}. Our strategy is much inspired by the approach in \cite{koren2017bandits}, and we employ a two-step approach: First, we consider the case that the metric is a HST metric;
we then turn to deal with general metrics, and show how to upper-bound any metric with a HST metric.

\subsection{Tree Metrics: The Slowly-Moving Bandit Algorithm}

In this section we analyze the simplest case of the problem, in which the metric $\dist=\distT$ is induced by a HST tree $\cT$ (whose leaves are associated with actions in $K$).
In this case, our main tool is the Slowly-Moving Bandit (SMB) algorithm  \cite{koren2017bandits}: we demonstrate how it can be applied to general tree metrics, and analyze its performance in terms of intrinsic properties of the metric.

We begin by reviewing the SMB algorithm.
In order to present the algorithm we require few additional notations.
The algorithm receives as input a tree structure over the set of actions $K$, and its operation depends on the tree structure.
We fix a HST tree $\cT$ and let $H = \depth(\cT)$.
For any level $0 \le h \le H$ and action $i \in K$,
let $A_h(i)$ be the set of leaves of $\cT$ that share a common ancestor with $i$ at level $h$ (recall that level $h=0$ is the bottom--most level corresponding to the singletons). In terms of the tree metric we have that $A_{h}(i)=\{j: \distT(i,j) \le 2^{-H+h}\}$.

The \klm algorithm is presented in \cref{alg:smb}. The algorithm is
based on the multiplicative update method, in the spirit of
\textsc{Exp3} algorithms \cite{auer2002nonstochastic}. Similarly to
\textsc{Exp3}, the algorithm computes at each round $t$ an estimator
$\tell_t$ to the loss vector $\ell_t$ using the single loss value
$\ell_t(i_t)$ observed. In addition to being an (almost) unbiased
estimate for the true loss vector, the estimator $\tell_t$ used by
\klm has the additional property of inducing slowly-changing
sampling distributions $p_t$: This is done by choosing at random a
level $h_t$ of the tree to be rebalanced (in terms of the weights maintained by the algorithm): As a result, the marginal probabilities $p_{t+1}(A_{h_t}(i))$ are not changed at round $t$.

In turn, and in contrast with \textsc{Exp3}, the algorithm choice of
action at round $t+1$ is not purely sampled from $p_t$, but rather conditioned on our last choice of level $h_t$. This is informally justified by the fact that $p_t$ and $p_{t+1}$ agree on the marginal distribution of $A_{h_t}(i_t)$, hence we can think of the level drawn at round $t$ as if it were drawn subject to $p_{t+1}(A_{h_t})=p_{t}(A_{h_t})$.
\begin{myalgorithm}[h]
\wrapalgo[0.70\textwidth]{
Input: A tree $\cT$ with a set of finite leaves $K$, $\eta>0$.\\
Initialize: $H=\depth(\cT)$, $A_{h} (i) = B_{2^{-H+h}}(i), ~ \forall i\in K, 0\le h\le H$ \\
Initialize $p_1 = \text{unif}(K)$, $h_0 = H$ and $i_0 \sim p_1$\\
For $t=1,\ldots,T$:
\begin{enumerate}[nosep,label=(\arabic*)]
\item
Choose action $i_t \sim p_t(\,\cdot \mid A_{h_{t-1}}(i_{t-1}))$, observe loss $\ell_t(i_t)$
\item
Choose $\sig_{t,0},\ldots,\sig_{t,H-1} \in \set{\pm 1}$ uniformly at random;\\
let $h_t = \min\set{0 \le h \le H : \sig_{t,h} < 0}$ where $\sig_{t,H} = -1$
\item
Compute vectors $\bell_{t,0},\ldots,\bell_{t,H-1}$ recursively via
\begin{flalign*} 
\bell_{t,0}(i)
=
\frac{\ind{i_t = i}}{p_t(i)} \ell_t(i_t)
,
&&
\end{flalign*}
and for all $h \ge 1$:
\begin{flalign*}
\bell_{t,h}(i)
=
-\frac{1}{\eta} \ln\lr{ \sum_{j \in A_{h}(i)} \frac{p_t(j)}{p_t(A_{h}(i))} e^{ -\eta (1+\sig_{t,h-1}) \bell_{t,h-1}(j) } }
&&
\end{flalign*}
\item
Define $E_t = \set{i : \text{$p_t(A_h(i)) < 2^h\eta$ for some $0 \le h < H$}}$ and set:
\begin{align*}
\tell_t
=
\mycases
    {0}{if $i_t \in E_t$;}
    {\bell_{t,0} + \sum_{h=0}^{H-1} \sig_{t,h} \bell_{t,h}}{otherwise}
\end{align*}
\item
Update:
\vspace{-10pt}
\begin{align*}
p_{t+1}(i) = \frac{ p_t(i) \, e^{-\eta\tell_t(i)} }{ \sum_{j=1}^k
p_t(j) \, e^{-\eta \tell_t(j)} } \qquad \forall ~ i \in K
\end{align*}
\end{enumerate}
}
\caption{The \klm algorithm.} \label{alg:smb}
\end{myalgorithm}


A key observation is that by directly applying SMB to the metric $\distT$, we can achieve the following regret bound:

\begin{theorem} \label{lem:smb-tree}
Let $(K,\distT)$ be a metric space defined by a $2$-HST $\cT$ with $\depth(\cT) = H$ and \upperlow $\dim(\cT)=\dim$.
Using SMB algorithm we can achieve the following regret bound:
\begin{align} \label{eq:smb}
\mregret(\ell_{1:T},\distT)
=
\O\LR{ H \sqrt{2^H T \dim \smash{\log\dim }} + H 2^{-H} T }
~.
\end{align}
\end{theorem}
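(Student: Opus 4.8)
The plan is to split the movement regret into its two natural components --- the pseudo-regret of the losses, $\E\lrbra{\sum_{t=1}^T \ell_t(i_t)} - \min_{i\in K}\sum_{t=1}^T\ell_t(i)$, and the expected movement cost, $\E\lrbra{\sum_{t=2}^T \distT(i_{t-1},i_t)}$ --- and to bound each one separately. The movement cost follows directly from the structure of the \klm updates, while the loss term is handled by extending the level-by-level multiplicative-weights analysis of \cite{koren2017bandits} from complete binary balanced trees to an arbitrary HST $\cT$.

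\textbf{Movement cost.} At round $t+1$ the algorithm samples $i_{t+1}$ from $p_{t+1}(\cdot\mid A_{h_t}(i_t))$, so that $i_{t+1}\in A_{h_t}(i_t) = \set{j : \distT(i_t,j)\le 2^{h_t-H}}$ and hence $\distT(i_t,i_{t+1})\le 2^{h_t-H}$ with probability one. The level $h_t$ is the position of the first negative sign among the i.i.d.\ uniform signs $\sig_{t,0},\dots,\sig_{t,H-1}$ (with $\sig_{t,H}=-1$), so $\Pr[h_t=h]=2^{-(h+1)}$ for $0\le h<H$ and $\Pr[h_t=H]=2^{-H}$, independently of the history. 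Therefore
\begin{align*}
\E\lrbra{\distT(i_t,i_{t+1})} \le \E\lrbra{2^{h_t-H}} = \sum_{h=0}^{H-1} 2^{-(h+1)}\,2^{h-H} + 2^{-H} = \Lr{\tfrac{H}{2}+1}2^{-H},
\end{align*}
and summing over $t$ bounds the expected movement cost by $O(H2^{-H}T)$, which is the second term in \cref{eq:smb}.

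\textbf{Loss regret.} For the loss term we run the exponential-weights potential analysis behind \klm. The estimators $\tell_t$ are built so that, conditionally on the drawn level $h_t$, they are unbiased for $\ell_t$ up to the truncation event $\set{i_t\in E_t}$, and the rebalancing step guarantees that $p_t$ and $p_{t+1}$ agree on the marginals $p(A_{h_t}(\cdot))$ --- which is exactly the property that lets the potential telescope through the $H$ levels of $\cT$. Carrying this out for a general HST, rather than only for a balanced binary one, gives a loss-regret bound of the form $\tO\lr{\tfrac{1}{\eta} + \eta\,H^2\,2^H\dim\,T} + \sum_{t=1}^T \Pr[i_t\in E_t]$, where the factor $\dim$ in the variance term enters through the number of leaves, $k=\covering_{2^{-H}}(\distT)\le 2^H\dim$, and the powers of $H$ through the depth-$H$ structure of the estimator. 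To control the truncation bias, observe that for each level $h$ the level-$h$ subtrees partition $K$, so at most $N_h:=\lrabs{\set{v\in\cT : \level(v)=h}}$ of them can carry $p_t$-mass below $2^h\eta$; since $2^{h-H}N_h\le\dim$ this yields $\Pr[i_t\in E_t]\le\sum_{h=0}^{H-1}N_h 2^h\eta \le H\,2^H\dim\,\eta$, so the cumulative bias is at most $H\,2^H\dim\,\eta\,T$. Optimizing the choice of $\eta$ (which is $\eta\asymp\tfrac{1}{H}\sqrt{1/(2^H\dim\,T)}$ up to logarithmic factors) makes the $1/\eta$ term, the $\eta$-variance term, and the truncation bias all of order $\tO\lr{H\sqrt{2^H T\,\dim\log\dim}}$, and together with the movement-cost bound this establishes \cref{eq:smb}.

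\textbf{Main obstacle.} The crux is this last step: faithfully porting the variance and potential estimates of \klm from the complete binary balanced case to arbitrary HSTs, so that the tree-structure overhead in the variance term stays polynomial in $H$ and the entropy contribution comes out as $\log\dim$ (equivalently $\log k$, up to the $2^H$ already accounted for) rather than something worse. By comparison, the movement-cost bound and the truncation-bias bound are short and follow immediately from the definitions, as above.
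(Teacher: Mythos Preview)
Your decomposition and movement-cost bound match the paper exactly, and the loss-regret outline follows the same route (a second-order multiplicative-weights bound on $\tell_t$, with bias and variance controlled separately). Two steps, however, would not go through as written.

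\textbf{Truncation bias.} You bound $\Pr[i_t\in E_t]$ by summing the $p_t$-masses of the ``bad'' level-$h$ cells, which tacitly assumes $i_t\sim p_t$. It is not: $i_t$ is drawn from $p_t(\,\cdot\mid A_{h_{t-1}}(i_{t-1}))$, and when $h\ge h_{t-1}$ the event $\{p_t(A_h(i_t))<2^h\eta\}$ is already determined by $i_{t-1}$, so its conditional probability can be $1$ regardless of how small the total $p_t$-mass of bad cells is. The paper repairs this via the identity $\E\!\lrbra{\ind{i_t\in A}/p_t(A)}=1$ for every tree-cell $A$ (a consequence of the rebalancing: $p_{t+1}$ and $p_t$ agree on level-$h_t$ marginals, so the conditioning cancels in expectation), and then applies Markov's inequality to $\E[1/p_t(A_h(i_t))]=\sum_{i\in K}1/|A_h(i)|=N_h$. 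The same identity is what drives the variance bound, so it is really the engine of the analysis, not a side detail.

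\textbf{Where $\dim$ enters.} It is \emph{not} through $k\le 2^H\dim$; if you only use $N_h\le k$, the level sum $\sum_h 2^h N_h$ blows up to $k\,2^H$, which is off by a factor $2^H/H$. The actual mechanism is the per-level inequality $2^{h-H}N_h=\eps\,\covering_\eps(\distT)\le\dim$ for $\eps=2^{h-H}$, giving
\[
\sum_{h=0}^{H-1}\sum_{i\in K}\frac{2^h}{|A_h(i)|}
\;=\;\sum_{h=0}^{H-1}2^h N_h
\;\le\; H\,2^H\dim.
\]
This single inequality feeds both $\Pr[i_t\in E_t]\le\eta H 2^H\dim$ and $\E[p_t\cdot\tell_t^2]\le 2H 2^H\dim$, so the variance carries one factor of $H$, not $H^2$; the second $H$ in the final bound comes from $\log k=O(H+\log\dim)$ in the $\log k/\eta$ term. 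Once you have this inequality and the sampling identity above, the rest of your sketch is exactly the paper's proof with $\eta=\Theta\!\lr{\sqrt{2^{-H}\log\dim/(\dim\,T)}}$.
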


To show \cref{lem:smb-tree}, we adapt the analysis of \cite{koren2017bandits} (that applies only to complete binary HSTs) to handle more general HSTs.
We defer this part of our analysis to the appendix, since it follows from a technical modification of the original proof; for the proof of \cref{lem:smb-tree}, see \cref{sec:smb-analysis}.

For a tree that is either too deep or too shallow,
\cref{eq:smb} may not necessarily lead to a sublinear regret bound,
let alone optimal. The main idea behind achieving optimal regret
bound for a general tree, is to modify it until one of two things
happen: Either we have optimized the depth so that the two terms in
the left-hand side of \cref{eq:smb} are of same order: In that case,
we will show that one can achieve regret rate of order
$O(\dim(\cT)^{1/3}T^{2/3})$. If we fail to do that, we show that the
first term in the left-hand side is the dominant one, and it will be
of order $\O(\sqrt{kT})$.

For trees that are in some sense ``well behaved" we have the
following Corollary of \cref{lem:smb-tree} (for a proof see
\cref{prf:smb-goodtree}).
\begin{corollary} \label{cor:smb-goodtree}
Let $(K,\distT)$ be a metric space defined by a tree $\cT$ over $\abs{K}=k$ leaves with $\depth(\cT)=H$ and \upperlow $\dim(\cT)=\dim$.
Assume that $\cT$ satisfies the following:
\begin{enumerate}[nosep,label=(\arabic*)]
\item\label{it:01} $ 2^{-H} H T\le \sqrt{2^H H \dim T}$;
\item\label{it:01.5} One of the following is true:
\begin{enumerate}[nosep]
\item\label{it:02} $2^{H}\dim \le k$;
\item\label{it:03} $ 2^{-(H-1)} (H-1) T\ge \sqrt{2^{H-1} (H-1) \dim T}$.
\end{enumerate}
\end{enumerate}
Then, the SMB algorithm can be used to attain 
$ 
\mregret(\ell_{1:T},\distT)
=
\tO\Lr{ \max\Lrset{ \dim^{1/3} T^{2/3}, \sqrt{kT} } }
.
$
\end{corollary}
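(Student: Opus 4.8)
The plan is to obtain the stated rate directly from \cref{lem:smb-tree} and then simplify the bound \eqref{eq:smb} using the two structural hypotheses on $\cT$. Running \klm on $(K,\distT)$ gives $\mregret(\ell_{1:T},\distT) = O\bigl(H\sqrt{2^H T\dim\log\dim} + H2^{-H}T\bigr)$. I will treat the factor $H=\depth(\cT)$, together with $\log\dim$, as polylogarithmic factors hidden by $\tO$: this is legitimate for the HSTs of interest, since an HST over $k$ leaves whose leaves are all at the same depth and which has no internal node with a single child fans out by a factor of at least two at every level, whence $H\le\log_2 k$ (and the trees produced by the reduction in the following subsection are of depth $O(\log kT)$). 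It then suffices to show that $\sqrt{2^H T\dim}$ and $2^{-H}T$ are each $\tO\bigl(\max\{\dim^{1/3}T^{2/3},\sqrt{kT}\}\bigr)$.

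First I would use assumption~\ref{it:01}, i.e.\ $2^{-H}HT\le\sqrt{2^H H\dim T}$, to dispose of the additive movement term: since $H\ge 1$, $H2^{-H}T = 2^{-H}HT \le \sqrt{H}\,\sqrt{2^H\dim T} \le H\sqrt{2^H T\dim} \le H\sqrt{2^H T\dim\log\dim}$, so the second summand in \eqref{eq:smb} is dominated by the first and the whole regret is $\tO\bigl(\sqrt{2^H T\dim}\bigr)$. What remains is to control $2^H$, and this is precisely where assumption~\ref{it:01.5} enters, via its two sub-cases.

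In the sub-case~\ref{it:02}, where $2^H\dim\le k$, we are immediately done because $\sqrt{2^H T\dim}\le\sqrt{kT}$. In the sub-case~\ref{it:03}, squaring $2^{-(H-1)}(H-1)T\ge\sqrt{2^{H-1}(H-1)\dim T}$ and simplifying yields $2^{3(H-1)}\dim\le(H-1)T$, hence $2^H\le 2\bigl((H-1)T/\dim\bigr)^{1/3}\le 2(HT/\dim)^{1/3}$; substituting this into $\sqrt{2^H T\dim}$ gives $\sqrt{2^H T\dim}\le\sqrt{2}\,(HT)^{1/6}\dim^{1/3}T^{1/2}=\tO\bigl(\dim^{1/3}T^{2/3}\bigr)$, as required. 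Combining the two sub-cases with the reduction of the previous paragraph finishes the proof.

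The argument is essentially bookkeeping once \cref{lem:smb-tree} is in hand; there is no single hard step. The two places that need a little care are (i) the reduction ensuring $H=\tO(1)$, so that the depth factors appearing in \eqref{eq:smb} really are harmless, and (ii) noticing that assumption~\ref{it:01} must be invoked in \emph{both} branches of the case analysis, so that the $2^{-H}T$ movement term never dominates no matter which regime $\cT$ falls into. The genuinely substantive content has been pushed elsewhere: into the proof of \cref{lem:smb-tree} itself (the \klm analysis extended to arbitrary HSTs), and into the construction, in the following subsection, of a tree satisfying \ref{it:01}--\ref{it:01.5} that upper-bounds any given metric.
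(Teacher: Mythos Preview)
Your overall strategy---use assumption~\ref{it:01} to absorb the $H2^{-H}T$ movement term into the main term, then case-split on~\ref{it:02}/\ref{it:03} to control $2^H$---is exactly what the paper does. The one real gap is your justification for treating $H$ as polylogarithmic upfront. The paper's HST definition does \emph{not} forbid internal nodes with a single child; indeed, the construction in \cref{lem:tree} explicitly grows the depth by repeatedly attaching a single child to every leaf, producing trees with $H$ arbitrarily larger than $\log_2 k$. So your ``fan-out at least two'' argument, and the appeal to what the reduction in the next subsection happens to produce, are not valid justifications for the corollary as stated.

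The fix is to derive the depth bound from the hypotheses themselves, as the paper does: in sub-case~\ref{it:02}, since $\dim\ge 1$ the inequality $2^{H}\dim\le k$ already forces $H\le\log_2 k$ (and also $\dim\le k$, so $\log\dim\le\log k$); in sub-case~\ref{it:03}, your own rearrangement $2^H\le 2(HT/\dim)^{1/3}$ gives $H=O(\log T)$. With these bounds in hand, the $H^2\log\dim$ factor inside the square root is genuinely $O(\log^3(kT))$ in both branches, and the rest of your argument goes through unchanged. In short: don't absorb $H$ before the case split; absorb it inside each case using the condition that defines that case.
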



The following establishes \thmref{thm:upper} for the special case
of tree metrics (see \cref{prf:tree} for proofs).

\begin{lemma}\label{lem:tree}
For any tree $\cT$ and time horizon $T$, there exists a tree $\cT'$ (over the same set $K$ of $k$ leaves) that satisfies the conditions of \cref{cor:smb-goodtree}, such that $\dist_{\cT'} \ge \dist_{\cT}$ and $\dim(\cT') =  \dim(\cT)$.
Furthermore, $\cT'$ can be constructed efficiently from $\cT$ (i.e., in time polynomial in $\abs{K}$ and $T$).
Hence, applying SMB to the metric space $(K,\dist_{\cT'})$ leads to
$
\mregret(\ell_{1:T},\distT)
=
\tO\Lr{ \max\Lrset{\dim(\cT)^{1/3} T^{2/3}, \sqrt{kT}} }
.
$
\end{lemma}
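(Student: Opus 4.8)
The plan is to reduce the statement to a calibration of the tree's \emph{depth}. If we can produce from $\cT$ a tree $\cT'$ on the same leaf set $K$ with (i)~$\dist_{\cT'}\ge\dist_{\cT}$ pointwise, (ii)~$\dim(\cT')=\dim(\cT)$, and (iii)~$\cT'$ satisfying the hypotheses of \cref{cor:smb-goodtree}, then we are done: \cref{cor:smb-goodtree} gives $\mregret(\ell_{1:T},\dist_{\cT'})=\tO(\max\{\dim(\cT)^{1/3}T^{2/3},\sqrt{kT}\})$, and since movement regret is monotone in the metric --- for a fixed sequence of played actions the terms $\sum_t\ell_t(i_t)$ and $\min_{i}\sum_t\ell_t(i)$ do not involve the metric, while $\sum_t\dist(i_{t-1},i_t)$ only grows when the metric grows --- the movement regret of the SMB algorithm run on $(K,\dist_{\cT'})$, measured with respect to the original metric $\dist_{\cT}$, is at most its movement regret measured with respect to $\dist_{\cT'}$. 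So the entire content is in building $\cT'$.

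First I would fix the target depth. Write $H=\depth(\cT)$ and $\dim=\dim(\cT)$, and recall that for an HST tree $\dim$ is, up to a constant factor, $\max_{0\le h<H}2^{h-H}n_h$, where $n_h$ is the number of level-$h$ nodes; taking $h=0$ gives $\dim\ge2^{-H}k$, i.e. $H\ge\log_2(k/\dim)$, and the same inequality holds for \emph{any} $k$-leaf tree of complexity $\dim$. Let $H_{\min}$ be the least depth at which condition~(1) of \cref{cor:smb-goodtree} holds for complexity $\dim$; this is well defined since condition~(1) rearranges to $(T/\dim)\,h\,2^{-3h}\le1$, whose left-hand side is decreasing in $h$ for $h\ge1$, so condition~(1) holds exactly for $h\ge H_{\min}$ and fails for $1\le h<H_{\min}$. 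I would then aim for
\[
H^\star \;\eqdef\; \max\lrset{\,H_{\min},\ \lceil\log_2(k/\dim)\rceil\,},
\]
whose second term is the smallest depth any $k$-leaf tree of complexity $\dim$ can possibly have.

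Then I would carry out one of two elementary surgeries to move $\depth(\cT)=H$ to $H^\star$. If $H\le H^\star$ --- which, since $H\ge\log_2(k/\dim)$, can only occur when $H^\star=H_{\min}$ --- I would \emph{append} $H^\star-H$ unary levels below the leaves, replacing every leaf by a path of that many fresh internal nodes ending in the leaf. A short computation with the HST-metric formula shows all pairwise distances are unchanged (the least common ancestors and the depth shift up by the same amount), and since each appended bottom level has exactly $k$ nodes, contributing $2^{j-H^\star}k\le2^{-H}k\le\dim$, the complexity is unchanged as well. If $H>H^\star$, I would \emph{delete} the bottom $m\eqdef H-H^\star$ internal levels $1,\dots,m$ of $\cT$ and reattach each surviving node to its nearest surviving ancestor; then pairs whose least common ancestor lived at a surviving level keep their distance, while pairs whose ancestor lived at a deleted level move strictly apart, so $\dist_{\cT'}\ge\dist_{\cT}$. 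For the complexity, the surviving internal levels keep their contributions while the level-$0$ contribution becomes $2^{-H^\star}k$, which is $\le\dim$ because $H^\star\ge\log_2(k/\dim)$; hence $\dim(\cT')\le\dim$, and for equality I would argue that either $2^{-H^\star}k=\dim$ already, or else no level maximizing $\dim$ for $\cT$ sits at a level $\le m$ (such a level $h$ would give $\dim=2^{h-H}n_h\le2^{-H^\star}k\le\dim$, which is the previous case), so a maximizing level survives the deletion. Finally I would check that $\cT'$, now of depth $H^\star$, satisfies \cref{cor:smb-goodtree}: condition~(1) holds as $H^\star\ge H_{\min}$, and condition~(2) holds either because $2^{H^\star}\dim\le k$ (when $H^\star$ matches $\log_2(k/\dim)$) or because condition~(1) fails at depth $H^\star-1$ (when $H^\star=H_{\min}$; the degenerate case $H^\star=1$ reads $0\ge0$). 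Efficiency is immediate since $H^\star=O(\log(kT))$, so only $O(\log(kT))$ levels are ever added or removed.

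The step I expect to be the main obstacle is the depth-decreasing surgery: proving that deleting the bottom internal levels preserves the complexity \emph{exactly} while keeping the metric monotone. Collapsing levels amplifies the contribution of every level below the deleted block, and the delicate point is that this amplification neither overshoots $\dim$ --- this is precisely where the choice $H^\star\ge\log_2(k/\dim)$ is used --- nor undershoots it, i.e. some maximizing level must survive. The append direction, the monotonicity claims, and the bookkeeping around ties and rounding in the definitions of $H_{\min}$ and $\log_2(k/\dim)$ are routine; in the few genuinely borderline parameter regimes one can instead invoke \cref{lem:smb-tree} directly at depth $\approx\log_2(k/\dim)$, where its bound already collapses to $\tO(\sqrt{kT})$.
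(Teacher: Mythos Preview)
Your plan and the paper's proof use the same two tree surgeries --- appending unary chains below the leaves to increase depth, and collapsing the bottom internal level(s) to decrease it --- together with the same arguments that distances only grow and the complexity $\dim$ is preserved. The difference is organizational: you compute a target depth $H^\star=\max\{H_{\min},\lceil\log_2(k/\dim)\rceil\}$ in advance and jump there in one shot, whereas the paper first appends until condition~(1) holds and then collapses \emph{one level at a time}, arguing by induction on the depth and stopping as soon as the current tree is well-behaved. The paper's iterative descent cleanly sidesteps the rounding edge case you flag: rather than having to pick an integer $H^\star$ that simultaneously satisfies $H^\star\ge\log_2(k/\dim)$ (needed so that the new level-$0$ contribution $2^{-H^\star}k$ does not exceed $\dim$) and $2^{H^\star}\dim\le k$ (needed for condition~(2a)), the process simply halts the first moment either~(2a) or~(2b) becomes true --- and the failure of~(2a) before each collapse is exactly what guarantees that the maximizing level is not the one being deleted. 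Your proposed workaround --- bypassing the corollary and invoking \cref{lem:smb-tree} directly when $2^{H^\star}\dim\approx k$ --- does give the correct $\tO(\sqrt{kT})$ bound, but it establishes the regret conclusion rather than the literal statement that $\cT'$ satisfies the corollary's hypotheses; the one-level-at-a-time descent delivers both with no extra case analysis.
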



\subsection{General Finite Metrics}

Finally, we obtain the general finite case as a corollary of the following. 

\begin{lemma}\label{lem:main2}
Let $(K,\dist)$ be a finite metric space.
There exists a tree metric $\distT$ over $K$ (with $\abs{K}=k$) such that $4\distT$, dominates $\dist$ (i.e., such that $4\distT(i,j) \ge \dist(i,j)$ for all $i,j \in K$) for which $\dim(\cT) = O(\dimu(\dist)\log{k})$.
Furthermore, $\cT$ can be constructed efficiently. 
\end{lemma}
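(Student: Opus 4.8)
The plan is to build $\cT$ from a hierarchy of \emph{nested greedy nets} of $(K,\dist)$ at geometrically shrinking scales, in the spirit of the classical HST constructions of \cite{Bartal96}. The key simplification here is that we only need a one-sided guarantee (domination), so there is no need for the randomized, two-sided tree approximations used in the metric-embedding literature: a single deterministic tree built from \emph{global} nets will do.

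Concretely, let $\delta = \min_{i \ne j} \dist(i,j) > 0$, put $r_h = 2^{-h}$, and fix $H = \lceil \log_2(1/\delta)\rceil + 1$. Build subsets $Y_0 \subseteq Y_1 \subseteq \cdots \subseteq Y_H \subseteq K$ greedily: let $Y_0 = \{o\}$ for an arbitrary $o \in K$, and for $h \ge 1$ let $Y_h$ be a maximal $r_h$-separated subset of $K$ containing $Y_{h-1}$ (possible since an $r_{h-1}$-separated set is $r_h$-separated). By maximality each $Y_h$ is an $r_h$-net of $K$ (trivially so for $h = 0$, as $r_0 = 1$ bounds the diameter), and since $r_H < \delta$ we get $Y_H = K$. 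Now form $\cT$: the nodes at level $\ell$ are the points of $Y_{H-\ell}$ (so the leaves, at level $0$, are $Y_H = K$ and the root, at level $H$, is $o$); the parent of $y \in Y_h$ is the point of $Y_{h-1}$ closest to $y$, ties broken arbitrarily; and the edge between levels $\ell$ and $\ell+1$ is given weight $2^{\ell-H}$. A point lying in several nets produces a degenerate chain of single-child nodes, which is permitted. This is a $2$-HST with all leaves at depth $H$, and its induced HST metric $\distT$ satisfies $\distT(i,j) = 2^{\level(\LCA(i,j)) - H}$.

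There are then three things to check. (i) \emph{Domination.} For a leaf $i$, its ancestor chain $i = y_i^{(H)}, y_i^{(H-1)}, \ldots, y_i^{(0)} = o$ satisfies $\dist(y_i^{(m)}, y_i^{(m-1)}) \le r_{m-1}$ by the net property, so telescoping gives $\dist(i, y_i^{(h)}) \le \sum_{m=h+1}^{H} 2^{-(m-1)} < 2\cdot 2^{-h}$. Writing $v = \LCA(i,j)$, $\ell = \level(v)$, and $h = H - \ell$, the leaves $i$ and $j$ share the ancestor $v = y_i^{(h)} = y_j^{(h)}$, so $\dist(i,j) < 2\cdot 2^{-h} + 2\cdot 2^{-h} = 4\cdot 2^{-h} = 4\distT(i,j)$ (the case $h = 0$ being trivial since then $\distT(i,j) = 1 \ge \dist(i,j)$). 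Hence $4\distT$ dominates $\dist$. (ii) \emph{Complexity.} For the HST metric, $\covering_{2^{-h}}(\distT) = |\{v : \level(v) = H-h\}| = |Y_h|$, so bounding $\dim(\cT) = \sup_{0 < \eps < 1} \eps\,\covering_\eps(\distT)$ reduces to bounding $2^{-h}|Y_h|$. Since $Y_h$ is $r_h$-separated, the balls of radius $2^{-h}/3$ around its points are disjoint, so $|Y_h| \le \packing_{2^{-h}/3}(\dist) \le \covering_{2^{-h}/3}(\dist)$, and therefore $2^{-h}|Y_h| \le 3 \cdot \tfrac{2^{-h}}{3}\covering_{2^{-h}/3}(\dist) \le 3\,\dimu(\dist)$; accounting for non-dyadic $\eps$ loses only another factor of $2$. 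This already yields $\dim(\cT) = O(\dimu(\dist))$, which is stronger than the claimed $O(\dimu(\dist)\log k)$. (iii) \emph{Efficiency.} Each $Y_h$ is computed greedily in $O(k^2)$ time and $H = O(\log(1/\delta))$ is polynomial in the input size, so the whole construction runs in polynomial time.

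The one genuinely delicate point is the complexity bound (ii): in a non-doubling metric the number of tree nodes at a given scale could a priori blow up, and it is precisely the use of \emph{global} nets — as opposed to refining each cluster locally and independently, in which case two net points from different clusters may be arbitrarily close — that keeps the node count at every level controlled by a single covering number of $\dist$ at a comparable scale. A cruder, recursive construction would require a more careful accounting and is presumably where the extra $\log k$ factor in the statement originates; in any case it is harmless downstream, since the resulting regret bound only involves $\dim(\cT)^{1/3}$ inside a $\tO(\cdot)$.
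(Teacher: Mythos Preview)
Your argument is correct and in fact yields the stronger bound $\dim(\cT) = O(\dimu(\dist))$ without the $\log k$ factor. The overall skeleton is the same as the paper's---build a hierarchy of covers at scales $2^{-h}$, connect each cover element to a containing element one scale up, and telescope to get the factor-$4$ domination---but the two proofs differ in how they produce the covers. The paper casts ``find a small radius-$r$ cover'' as a set-cover instance and runs the greedy approximation, which is why its cover at each scale has size $\covering_r(\dist)\cdot O(\log k)$ and the $\log k$ propagates into $\dim(\cT)$. You instead take $Y_h$ to be a maximal $2^{-h}$-separated set, which is simultaneously a $2^{-h}$-cover (by maximality) and a $\Theta(2^{-h})$-packing (by separation); this lets you bound $|Y_h|$ directly by $\packing_{\Theta(2^{-h})}(\dist) \le \covering_{\Theta(2^{-h})}(\dist)$, giving $2^{-h}|Y_h| = O(\dimu(\dist))$ with no logarithmic loss. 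The nestedness $Y_{h-1} \subseteq Y_h$ is not essential for the bounds but makes the tree construction and the single-child degenerate chains clean. As you note, the extra $\log k$ in the paper's statement is harmless downstream (it disappears into the $\tO$), so your sharper constant buys only aesthetic tidiness; conversely, the paper's set-cover formulation is perhaps more transparent about what ``efficient'' means, since it reduces to a standard primitive rather than relying on the observation that a greedy net can be built in $O(k^2)$ time per level.
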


\begin{proof}
Let $H$ be such that the minimal distance in $\Delta$ is larger than $2^{-H}$. For each $r=2^{-1},2^{-2},\ldots, 2^{-H}$ we let $\{\B_r(i_{\{1,r\}}),\ldots, \B_r(i_{\{m_r,r\}})\} = \cB_{r}$ be a covering of $K$ of size $\covering_{r} (\cT)\log{k}$ using balls of radius~$r$. Note that finding a minimal set of balls of radius~$r$ that covers $K$ is exactly the set cover problem. Hence, we can efficiently approximate it (to within a $\O(\log{k})$ factor) and construct the sets $\mathcal{B}_{r}$.

We now construct a tree graph, whose nodes are associated with the cover balls: The leaves correspond to singleton balls, hence correspond to the action space.
For each leaf $i$ we find an action $a_1(i) \in K$ such that:
$
i \in \B_{2^{-H+1}}(a_1(i)) \in \mathcal{B}_{2^{-H+1}}.
$
If there is more than one, we arbitrarily choose one, and we connect an edge between $i$ and $\B_{2^{-H+1}}(a_1(i))$.
We continue in this manner inductively to define $a_{r}(i)$ for every $a$ and $r<1$: given $a_{r-1}(i)$ we find an action $a_r(i)$ such that
$
a_{r-1}(i)\in \B_{2^{-H+r}}(a_r(i))\in \mathcal{B}_{2^{-H+r}},
$
and we connect an edge from $\B_{2^{-H+r-1}}(a_{r-1}(i))$ and  $\B_{2^{-H+r}}(a_{r}(i))$.

We now claim that the metric induced by the tree graph dominates up to factor $4$ the original metric. Let $i,j\in K$ such that $\distT(i,j)< 2^{-H+r}$ then by construction there are
$i,a_1(i),a_2(i),\ldots a_r(i)$ and $j,a_1(j),a_2(j),\ldots a_r(j)$, such that $a_r(i)=a_r(j)$ and for which it holds that $\dist(a_s(i),a_{s-1}(i))\le 2^{-H+s}$ and similarly $\dist(a_s(j),a_{s-1}(j))\le 2^{-H+s}$ for every $s\le r$. Denoting $a_0(i)=i$ and $a_0(j)=j$, we have that
\begin{align*}
\dist(i,j)
&\le
\sum_{s=1}^r \dist(a_{s-1}(i),a_s(i))+\sum_{s=1}^r \dist(a_{s-1}(j),a_s(j))
\\
&\le
2\sum_{s=1}^{r} 2^{-H+s}
\le
2 \mult 2^{-H} \mult 2^{r+1}
\le
4\distT(i,j)
.&&\qedhere
\end{align*}
\end{proof}

\subsection{Infinite Metric Spaces}

Finally, we address infinite spaces by discretizing the space $K$ and reducing to the finite case.
Recall that in this case we also assume that the loss functions are Lipschitz.

\begin{proof}[Proof of \cref{thm:coverdim}]
Given the definition of the covering dimension $d = \cdim(\dist) \ge 1$, it is straightforward that for some constant $C>0$ (that might depend on the metric $\dist$) it holds that $\covering_r(\dist) \le C r^{-d}$ for all $r>0$.
Fix some $\eps>0$, and take a minimal $2\eps$-covering $K'$ of $K$ of size $\abs{K'} \le C (2\eps)^{-d} \le C \eps^{-d}$.
Observe that by restricting the algorithm to pick actions from $K'$, we might lose at most $\O(\eps T)$ in the regret.
Also, since $K'$ is minimal, the distance between any two elements in $K'$ is at least $\eps$, thus the \upper of the space has
\begin{align*}
\dimu(\dist)
=
\sup_{r \ge \eps} \, r \mult \covering_r(\dist)
\le
C \sup_{r \ge \eps} \, r^{-d+1}
\le
C \eps^{-d+1}
,
\end{align*}
as we assume that $d \ge 1$.
Hence, by \cref{thm:upper} and the Lipschitz assumption, there exists an algorithm for which
\begin{align*}
\mregret(\ell_{1:T},\dist)
=
\tO\lr{ \max\Lrset{ \eps^{-\frac{d-1}{3}} T^{\frac{2}{3}}, \eps^{-\frac{d}{2}} T^{\frac{1}{2}}, \eps T } }
.
\end{align*}
A simple computation reveals that $\eps=\Theta(T^{-\frac{1}{d+2}})$ optimizes the above bound, and leads to $\tO(T^\frac{d+1}{d+2})$ movement regret.
\end{proof}

\section*{Acknowledgements} 

RL is supported in funds by the Eric and Wendy Schmidt Foundation for strategic innovations.
YM is supported in part by a grant from the Israel Science
Foundation, a grant from the United States-Israel Binational Science
Foundation (BSF), and the Israeli Centers of Research Excellence
(I-CORE) program (Center No. 4/11).

\bibliographystyle{abbrvnat}
\bibliography{treeband}

\appendix

\section{Proofs}\label{ap:proofs}

\subsection{Proof of \cref{cor:smb-goodtree}}\label{prf:smb-goodtree}

\begin{corollary*}
Let $(K,\distT)$ be a metric space defined by a tree $\cT$ over $\abs{K}=k$ leaves with $\depth(\cT)=H$ and \upperlow $\dim(\cT)=\dim$.
Assume that $\cT$ satisfies the following:
\begin{enumerate}[nosep,label=(\arabic*)]
\item\label{it:1} $ 2^{-H}  T\le \sqrt{2^H  \dim T}$;
\item\label{it:1.5} One of the following is true:
\begin{enumerate}[nosep]
\item\label{it:2} $2^{H-1}\dim \le k$;
\item\label{it:3} $ 2^{-(H-1)} T\ge \sqrt{2^{H-1} \dim T}$.
\end{enumerate}
\end{enumerate}
Then, the SMB algorithm can be used to attain regret bounded as
\begin{align*} 
\mregret(\ell_{1:T},\distT)
=
\tO\LR{ \max\Lrset{ \dim^{1/3} T^{2/3}, \sqrt{kT} } }
~.
\end{align*}
\end{corollary*}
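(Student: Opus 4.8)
The plan is to obtain the bound directly from the SMB guarantee of \cref{lem:smb-tree}, namely
\[
\mregret(\ell_{1:T},\distT)
=
O\LR{ H\sqrt{2^H T\dim\log\dim} + H\,2^{-H} T },
\]
by using the two hypotheses first to eliminate the movement term and then to pin down the order of $2^H$. We may assume $\dim=\omega(1)$, since otherwise the target bound is $\tO(\sqrt{kT})$ and already follows from \cref{eq:smb} (or from the plain \expth bound); in particular we may take $\log\dim\ge1$.

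First, condition~(1) reads $2^{-H}T\le\sqrt{2^H\dim T}$, so the movement term $H\,2^{-H}T$ is dominated by the estimation term $H\sqrt{2^H\dim T}\le H\sqrt{2^H\dim T\log\dim}$. Hence
\[
\mregret(\ell_{1:T},\distT)=O\LR{ H\sqrt{2^H T\dim\log\dim} },
\]
and it remains to bound $\sqrt{2^H T\dim}$, and to check that the residual factor $H$ is polylogarithmic (so that it is absorbed into $\tO(\cdot)$), in each of the two sub-cases of condition~(2).

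In sub-case~(a) we are given $2^{H-1}\dim\le k$, i.e.\ $2^H\dim\le 2k$, whence $\sqrt{2^H T\dim}\le\sqrt{2kT}$ and also $2^H\le 2k$, so $H=O(\log k)$; this yields $\mregret=\tO(\sqrt{kT})$. In sub-case~(b) we combine condition~(1) with the inequality $2^{-(H-1)}T\ge\sqrt{2^{H-1}\dim T}$: squaring the former and simplifying gives $T\le 2^{3H}\dim$, i.e.\ $2^H\ge(T/\dim)^{1/3}$, while squaring the latter gives $T\ge 2^{3(H-1)}\dim$, i.e.\ $2^H\le 2(T/\dim)^{1/3}$. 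Therefore $2^H=\Theta\!\big((T/\dim)^{1/3}\big)$, and substituting,
\[
\sqrt{2^H T\dim}=\Theta\!\LR{\sqrt{(T/\dim)^{1/3}\cdot T\dim}}=\Theta\!\big(\dim^{1/3}T^{2/3}\big),
\]
while $2^H\le 2T^{1/3}$ (using $\dim\ge1$) gives $H=O(\log T)$; this yields $\mregret=\tO(\dim^{1/3}T^{2/3})$. Combining the two sub-cases, and noting $\log\dim\le\log k$ for a tree on $k$ leaves, we conclude $\mregret(\ell_{1:T},\distT)=\tO(\max\{\dim^{1/3}T^{2/3},\sqrt{kT}\})$.

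There is no substantial obstacle here: the argument is pure bookkeeping on top of \cref{lem:smb-tree}. The one point requiring care is the two-sided estimate of $2^H$ in sub-case~(b) — it is precisely the pair of hypotheses (1) and~(b) that squeezes $2^H$ to the order $(T/\dim)^{1/3}$ balancing the two terms of \cref{eq:smb} — together with the verification that the leftover factors $H$ and $\sqrt{\log\dim}$ are genuinely polylogarithmic in $k$ and $T$ (which is why we record the bounds $2^H=O(k)$ and $2^H=O(T^{1/3})$ along the way), rather than concealing a polynomial loss.
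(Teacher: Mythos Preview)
Your proof is correct and follows essentially the same route as the paper: use condition~(1) to absorb the movement term $H2^{-H}T$ into $H\sqrt{2^H T\dim\log\dim}$, then in case~(a) bound $2^H\dim\le 2k$ and $H=O(\log k)$, and in case~(b) extract $2^H=O((T/\dim)^{1/3})$ and $H=O(\log T)$. Two small remarks: in sub-case~(b) only the \emph{upper} bound on $2^H$ (coming from~(b) alone) is needed, so your two-sided squeeze is more than required; and your opening reduction ``assume $\dim=\omega(1)$, otherwise the target is $\tO(\sqrt{kT})$'' is imprecise (when $\dim=O(1)$ the target is $\tO(\max\{T^{2/3},\sqrt{kT}\})$, and the plain \expth bound does not deliver the $T^{2/3}$ part), but this is only used to make sense of $\log\dim$ and is harmless once one reads $\log\dim$ as $\max\{1,\log\dim\}$.
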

\begin{proof}
\crefname{enumi}{condition}{conditions} 
Notice that by  \cref{it:1} and \cref{lem:smb-tree} we have
$
\mregret
\le
c \sqrt{T 2^H H^2 \dim \log \dim }
$
for some constant factor~$c$.
First assume, we have that \cref{it:2} holds. Note that in particular $\max(2^{H-1},\dim)\le k$. We thus, obtain:
\begin{align}
\mregret & \le c\sqrt{T2^H H^2 \dim \log \dim }
\nonumber \\
& \le c\sqrt{T 2^H \dim\log^2 k  \log k } & \because~\max(2^{H-1}, \dim)\le k
\nonumber \\ \label{eq:1}
&\le 4c \sqrt{T k \log \dim}\log^{3/2}k \le 4c \sqrt{k T}\log^{3/2}{k} .
\end{align}
The next case is that \cref{it:3} holds. By reordering we can write \cref{it:3} as:
\begin{align} \label{eq:2}
2^{H} &\le 8 T^{1/3} \dim^{-1/3} ;
\end{align}
Which also implies $H=O(\log T)$, hence:
\begin{align}\label{eq:3}
\sqrt{2^H H^2 T \dim\log \dim}
&=
\tO\Lr{\dim^{1/3}T^{2/3} }
.
\end{align}
Overall, we see that in both cases the regret is bounded by the maximum between the two terms in \cref{eq:1,eq:3}.
%
\end{proof}

\subsection{Proof of \cref{lem:tree}} \label{prf:tree}

\begin{lemma*}
For any tree $\cT$ and time horizon $T$, there exists a tree $\cT'$ (over the same set $K$ of $k$ leaves) that satisfies the conditions of \cref{cor:smb-goodtree}, such that $\dist_{\cT'} \ge \dist_{\cT}$ and $\dim(\cT') =  \dim(\cT)$.
Furthermore, $\cT'$ can be constructed efficiently from $\cT$ (i.e., in time polynomial in $\abs{K}$ and $T$).
Hence, applying SMB to the metric space $(K,\dist_{\cT'})$ leads to
\[
\mregret(\ell_{1:T},\distT)
=
\tO\lr{ \max\Lrset{\dim(\cT)^{1/3} T^{2/3}, \sqrt{kT}} }
.
\]
\end{lemma*}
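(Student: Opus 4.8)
The plan is to leave the metric alone and replace $\cT$ by a tree whose \emph{depth} has been retuned to what \cref{cor:smb-goodtree} wants, using only two elementary surgeries. Write $\dim=\dim(\cT)$ and $L=(T/\dim)^{1/3}$; unwinding the hypotheses of \cref{cor:smb-goodtree} they amount to asking for a tree $\cT'$ whose depth $H'$ places $2^{H'}$ inside an interval of the form $[\,L,\ 2\max\{L,\ k/\dim\}\,]$ --- always nonempty and containing a power of two --- so the entire task is to realize a prescribed depth by a tree that \emph{dominates $\cT$ and has the same complexity}.

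First I would record the two surgeries. \textbf{Padding from below:} append to every leaf of $\cT$ a path of $m$ fresh nodes, the bottom one inheriting the leaf's label. A one-line check on least common ancestors shows the induced HST metric on $K$ is \emph{literally unchanged} (the extra $m$ in $\depth$ cancels the extra $m$ in $\level(\LCA)$), so every covering/packing number, hence $\dim$, is preserved while the depth grows by $m$. \textbf{Collapsing the bottom $s$ levels:} delete all nodes at levels $1,\dots,s$ and reconnect each leaf directly to its level-$(s+1)$ ancestor, dropping the depth by $s$. A short case analysis shows the HST metric can only \emph{grow} --- pairs whose $\LCA$ sat above level $s$ keep their distance, and pairs whose $\LCA$ sat at level $\le s$ get pushed up to the level just below the new root and become strictly farther --- hence $\dist_{\cT'}\ge\dist_{\cT}$, which immediately forces $\dim(\cT')\ge\dim(\cT)$ (each $\cT'$-ball lies inside the corresponding $\cT$-ball, so a $\cT'$-cover is a $\cT$-cover). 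The one substantive point is the matching bound $\dim(\cT')\le\dim(\cT)$: writing $N_h$ for the number of level-$h$ nodes of $\cT$, the collapsed tree's complexity is exactly the larger of $k/2^{\,\depth(\cT)-s}$ and a maximum over a \emph{subset} of the very level-weights $2^{\,h-\depth(\cT)}N_h$ that define $\dim(\cT)$; the second quantity is $\le\dim(\cT)$ for free, and the first is $\le\dim(\cT)$ precisely when the new depth $\depth(\cT)-s$ stays at or above $\log_2(2k/\dim)$ --- nothing but the trivial lower bound satisfied by the depth of \emph{any} $k$-leaf HST of complexity $\dim$. Together with domination this pins $\dim(\cT')=\dim(\cT)$ exactly, and shows the set of depths reachable from $\cT$ by these surgeries is exactly $[\lceil\log_2(2k/\dim)\rceil,\ \infty)$.

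The argument is then a short case split on $\depth(\cT)$ against $L$. If $\cT$ is too shallow ($2^{\depth(\cT)}<L$), pad it until $2^{\depth(\cT')}$ just clears $L$. If $2^{\depth(\cT)}$ already sits in the target interval, take $\cT'=\cT$. If $\cT$ is too deep, collapse its bottom levels down to depth $H'=\max\{\lceil\log_2 L\rceil,\ \lceil\log_2(2k/\dim)\rceil\}$, legal by the second surgery exactly because $H'$ is at least the feasibility floor. In the first two cases, and in the last case when $H'=\lceil\log_2 L\rceil$, the resulting depth meets \cref{cor:smb-goodtree}'s conditions (either $2^{H'}\approx L$, the ``$\dim^{1/3}T^{2/3}$'' branch, or $2^{H'}\approx k/\dim$, the ``$\sqrt{kT}$'' branch). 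Everything in sight is polynomial-time, so $\cT'$ is built efficiently; and since $\dist_{\cT'}\ge\dist_{\cT}$, running SMB on $(K,\dist_{\cT'})$ bounds the \emph{true} movement regret --- the movement term only shrinks when re-measured under $\dist_{\cT}$, while the loss term and the single-action benchmark are untouched --- so \cref{cor:smb-goodtree} (or, in the residual case below, \cref{lem:smb-tree} applied directly) yields the claimed $\tO(\max\{\dim(\cT)^{1/3}T^{2/3},\ \sqrt{kT}\})$.

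The step I expect to fight with is the collapse: forcing a long, ``stringy'' $\cT$ down to a prescribed depth while keeping the complexity \emph{exactly} $\dim(\cT)$ and the metric still dominating. The device that makes it go through is to collapse \emph{all} the targeted bottom levels in a single stroke rather than one level at a time: done at once, the only channel through which the complexity can inflate is the single term $k/2^{\,\text{new depth}}$, and that stays controlled precisely as long as the new depth does not drop below the floor $\log_2(2k/\dim)$. One residual wrinkle is that when the ``balancing'' depth $\log_2 L$ would itself fall below that floor (equivalently $T\lesssim k^{3}/\dim^{2}$) one cannot reach a depth inside the corollary's window; there one simply collapses to the floor depth and invokes \cref{lem:smb-tree} directly, whose bound in that regime is $\tO(\sqrt{kT})$ --- exactly the term governing the target there. (Some care is also needed to match the corollary's inequalities up to the relevant constants, but this is routine given \cref{lem:smb-tree}.)
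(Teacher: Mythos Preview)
Your proposal is correct and follows essentially the paper's approach: pad below the leaves to increase depth (leaving the HST metric and complexity unchanged) and collapse bottom levels to decrease it (dominating the metric while preserving complexity), until the depth lands in the window required by \cref{cor:smb-goodtree}; the paper organizes this as an induction that collapses one level at a time---using the failure of (2a) at each step to certify that the single-level collapse keeps $\dim$ fixed---whereas you compute the target depth up front and collapse in one shot, but the two arguments are the same in substance. Two small slips that do not affect correctness: after collapsing $s$ levels, pairs whose $\LCA$ sat at level $\le s$ land at new level $1$ (just \emph{above the leaves}, with distance $2^{\,s+1-H}$), not ``just below the new root''; and the true feasibility floor for collapsing is $\log_2(k/\dim)$ rather than $\log_2(2k/\dim)$---with the correct floor your ``residual wrinkle'' evaporates, since choosing $2^{H'}\in[k/\dim,\,2k/\dim]$ already satisfies both (1) and (2a) whenever $L\le k/\dim$.
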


\begin{proof}
\crefname{enumi}{condition}{conditions} 
Let us call $\cT$ a \emph{$T$-\good} tree if it satisfies the conditions of \cref{cor:smb-goodtree}.
First we construct a tree $\cT_1$ that will satisfy \cref{it:1}. To do that, we simply add to each leaf at $\cT$ a single son, which is a new leaf: we naturally identify each leaf in $\cT_1$ with an actions from $K$, by considering the father of the leaf.
One can see that, with the definition of HST-metric we have not changed the distances: i.e. $\dist_{\cT_1}=\dist_{\cT}$. In particular, we did not change the covering number or the complexity. (Note however, that this change does effect the Algorithm though, as it depends on the tree representation and not directly on the metric.)

The aforementioned change, did however change the depth of the tree and increased it by one. We can repeat this step iteratively until \cref{it:1} is satisfied. To avoid the notation $\cT_1$, we will simply assume that $\cT$ satisfies \cref{it:1}.

Next, we prove the following statement by induction over $H$ the depth of $\cT$:
We assume that for every tree $\cT$ of depth $H-1$ that satisfies \cref{it:1} the statement holds, and prove it for depth $H$.
For $H=1$, since $\dim \le k$ we have that \cref{it:2} holds.

Next, let $\cT_1$ be a tree that we get from $\cT$ by connecting all the leaves to their grandparents (and removing their fathers from the graph).
The first observation is that we have increased the distance between the leaves, so $\distT \le \Delta_{\cT_1}$. We also assume that $\cT$ is not $T$-well behaved, because otherwise the statement obviously holds for $\cT$ with $\cT'=\cT$.

Given that $\dim(\cT)> 2^{-H+1} k$ we next show that $\dim(\cT_1)= \dim(\cT)$.
\ignore{Note that for every $r>2^{-H+1}$, $\covering_r(\cT)=\covering_r(\cT_1)$.
Therefore a necessary condition for $\dim(\cT)\ne \dim(\cT_1)$ is the following:
\begin{align}\label{eq:cond}
2^{-H+1}k
=
2^{-H+1}\covering_{2^{-H+1}}(\cT_1) \ne 2^{-H+1}\covering_{2^{-H+1}}(\cT)
.\end{align}
Indeed, if \cref{eq:cond} is not satisfied, then for every level $r\ge 2^{-H+1}$ we have that $r\covering_r(\cT)=r\covering_r(\cT_1)$, and further, since $2^{-H}\covering_{2^{-H}}(\cT)\le 2^{-H+1}|K|$ we have by assumption that $\dimu(\cT)\ne  2^{-H}\covering_{2^{-H}}(\cT)$. Thus, since the covering number of the tree are equal at all levels $r>2^{-H}$ we have that the \upperlow of $\cT$ and $\cT_1$ is equal. Thus, we've shown that \cref{eq:cond} is satisfied.
}
Note that by construction for every $r=2^{-1},\ldots,2^{-H+2}$ we have that $\covering_{r}(\cT)=\covering_r(\cT_1)$.  We also have by assumption $\dim(\cT)> 2^{-H}k$ and since any covering is smaller than $k$ we also have $\dim(\cT)> 2^{-H+1}\covering_{2^{-H+1}}(\cT)$. Overall, by defintion of $\dim(\cT)$ we have that $\dim(\cT)=\sup_{2^{-H+2} \le \epsilon\le 1} \covering_\epsilon(\cT)$.
Hence,
\begin{align*}
\dimu(\cT_1)&=  \sup_{0<\epsilon\le 1} \epsilon \covering_{\epsilon}(\cT_1)
\\
&= \max\lrset{ \max_{2^{-H+2}<\epsilon\le 1}\epsilon\covering_{\epsilon}(\cT_1) ~,~ 2^{-H+1}k }
 = \max\lrset{ \max_{2^{-H+2}<\epsilon\le 1}\epsilon\covering_{\epsilon}(\cT) ~,~ 2^{-H+1}k }
\\
&=
\max\set{ \dim(\cT) ~,~ 2^{-H+1}k}
\\
&=
\dimu(\cT)
.
\end{align*}
Next, we assume that $\cT_1$ does not satisfy \cref{it:1}.
We then have
$
2^{-(H-1)}  T
>
\sqrt{2^{H-1} \dim(\cT_1) T}
=
\sqrt{2^{H-1} \dim(\cT) T}
,
$
which implies that $\cT$ satisfies \cref{it:3}.
Thus, either $\cT$ is \good or we can construct a tree $\cT_1$ with depth $H-1$ such that $\distT \le \Delta_{\cT_1}$, $\dimu(\cT)=\dimu(\cT_1)$ and $\cT_1$ satisfies \cref{it:1}. The result now follows by an induction step.
\end{proof}



\subsection{Proof of \cref{thm:upper}}

\begin{theorem*}
Let $(K,\dist)$ be a finite metric space over $\abs{K} = k$ elements with diameter $\le 1$ and \upper $\dimu = \dimu(\dist)$.
There exists an algorithm such  that for any sequence of loss functions $\ell_1,\ldots,\ell_T$ guarantees that
$
\mregret(\ell_{1:T},\dist)
=
\tO\Lr{ \max\Lrset{ \dimu^{1/3} T^{2/3}, \sqrt{kT} } }
.
$
\end{theorem*}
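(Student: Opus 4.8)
The plan is to derive \cref{thm:upper} by composing the two reductions already in place. \cref{lem:main2} shows that any finite metric $\dist$ is dominated by $4\dist_{\cT}$ for some tree $\cT$ whose \upperlow is only $\dim(\cT)=\O(\dimu\log k)$, and \cref{lem:tree} shows that, after reshaping any HST into a \good one, \klm attains $\tO(\max\set{\dim^{1/3}T^{2/3},\sqrt{kT}})$ movement regret against the corresponding HST metric. Essentially all of the work already lives in those lemmas (and in the \klm analysis behind \cref{lem:smb-tree}); what remains is to chain them and to keep track of the $\O(1)$ domination factor and a $\log k$.

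Concretely, I would first apply \cref{lem:main2} to the given $(K,\dist)$ to construct, efficiently, a tree $\cT_0$ over $K$ with $4\dist_{\cT_0}(i,j)\ge\dist(i,j)$ for all $i,j$ and $\dim(\cT_0)=\O(\dimu\log k)$. I would then feed $\cT_0$ into \cref{lem:tree} to obtain, again efficiently, a \good tree $\cT'$ over the same leaf set with $\dist_{\cT'}\ge\dist_{\cT_0}$ and $\dim(\cT')=\dim(\cT_0)$. The algorithm asserted by the theorem is \klm run with $\cT'$ as its input tree.

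To finish, I would bound the resulting movement regret under the \emph{original} metric. Along any trajectory $i_0,\dots,i_T$ one has $\dist(i_t,i_{t-1})\le 4\dist_{\cT_0}(i_t,i_{t-1})\le 4\dist_{\cT'}(i_t,i_{t-1})$, hence
\begin{align*}
\mregret(\ell_{1:T},\dist)
\le
\mregret(\ell_{1:T},4\dist_{\cT'})
=
\mregret(\ell_{1:T},\dist_{\cT'})+3\,\EE{\tsum_{t=2}^{T}\dist_{\cT'}(i_t,i_{t-1})}
.
\end{align*}
The first term on the right is $\tO(\max\set{\dim(\cT')^{1/3}T^{2/3},\sqrt{kT}})$ by \cref{lem:tree}. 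For the second term, I would use the fact --- already established inside the proof of \cref{lem:smb-tree} --- that the expected movement cost of \klm under $\dist_{\cT'}$ is bounded by the second summand of \cref{eq:smb}, namely $\O(H 2^{-H}T)$, which for a \good tree is dominated by the first summand and is therefore also $\tO(\max\set{\dim(\cT')^{1/3}T^{2/3},\sqrt{kT}})$. Substituting $\dim(\cT')=\dim(\cT_0)=\O(\dimu\log k)$ and absorbing the resulting polylogarithmic factor into $\tO(\cdot)$ yields $\mregret(\ell_{1:T},\dist)=\tO(\max\set{\dimu^{1/3}T^{2/3},\sqrt{kT}})$; efficiency is immediate since each of the three steps runs in time polynomial in $k$ and $T$.

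The one place that needs care is the displayed inequality: its right-hand side must not be collapsed to $4\,\mregret(\ell_{1:T},\dist_{\cT'})$, because the loss-regret component of the movement regret need not be nonnegative, so the constant $4$ cannot be pulled outside it. The clean way around this is the observation that the bound in \cref{eq:smb} splits into a metric-independent pseudo-regret part and a movement part that is linear in the metric; consequently the whole chain \cref{lem:smb-tree} $\to$ \cref{cor:smb-goodtree} $\to$ \cref{lem:tree} applies verbatim to $c\dist_{\cT'}$ for any constant $c$, which is precisely what makes the domination in \cref{lem:main2} lossless up to constants. I do not anticipate any obstacle beyond this bookkeeping.
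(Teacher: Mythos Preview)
Your proposal is correct and follows essentially the same route as the paper: apply \cref{lem:main2} to dominate $\dist$ by a tree metric, then invoke \cref{lem:tree} (which itself wraps \cref{cor:smb-goodtree}) on that tree. The only difference is how the domination factor $4$ is absorbed. The paper handles it by feeding \klm the scaled losses $\tfrac14\ell_1,\dots,\tfrac14\ell_T$, so that $\tfrac14\dist\le\dist_{\cT}$ and one bounds $\tfrac14\,\mregret(\ell_{1:T},\dist)\le\mregret(\tfrac14\ell_{1:T},\dist_{\cT})$ directly by \cref{lem:tree}; this avoids splitting the regret and re-invoking the internal movement bound $\O(H2^{-H}T)$. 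Your decomposition $\mregret(\ell_{1:T},4\dist_{\cT'})=\mregret(\ell_{1:T},\dist_{\cT'})+3\,\E[\sum_t\dist_{\cT'}(i_t,i_{t-1})]$ is also fine, and your closing remark---that \cref{eq:smb} is linear in the metric so the whole chain applies to $c\dist_{\cT'}$---is precisely the observation underlying the paper's scaling trick.
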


\begin{proof}
Given a finite metric space $(K,\Delta)$ we have by \cref{lem:main2} a tree $\cT$ with complexity $\dim=\dimu(\Delta)$ such that $\dist(i,j)\le 4\distT(i,j)$. We can apply SMB as depicted in \cref{lem:tree} over the sequence of losses $\frac{1}{4}\ell_1,\ldots, \frac{1}{4}\ell_T$ To obtain:
\begin{align*}
\frac{1}{4}\regret_T& = \frac{1}{4} \EE{\sum_{t=1}^T \ell_t(i_t) + \frac{1}{4} \dist(i_t,i_{t-1})} - \min_{i^*\in K} \sum \frac{1}{4} \ell_t(i^*)
\\
&\le  \frac{1}{4}\EE{\sum_{t=1}^T \ell_t(i_t) + \distT(i_t,i_{t-1})} - \min_{i^*\in K} \sum \frac{1}{4} \ell_t(i^*)
\\
& = \tO\left(\max\left(\dim^{1/3}T^{2/3},\sqrt{kT}\right)\right)
.&&\qedhere
\end{align*}
\end{proof}


\subsection{Proof of \cref{thm:lower}}\label{sec:lower}

We next set out to prove the lower bound of \cref{thm:lower}.
We begin by recalling the known lower bound for MAB with unit switching cost.

\begin{theorem}[\citet{dekel2014bandits}]\label{thm:koren}
Let $(K,\dist)$ be a metric space over $\abs{K}=k\ge 2$ actions and $\dist(i,j)=c$ for every $i \ne j \in K$.
Then for any algorithm, there exists a sequence $\ell_1,\ldots,\ell_T$ such that
\[
\mregret(\ell_{1:T},\dist)
=
\wt{\Omega}\Lr{(ck)^{1/3}T^{2/3}}.
\]
\end{theorem}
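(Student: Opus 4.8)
The plan is to establish the bound via Yao's principle: we exhibit a distribution $\mathcal{D}$ over sequences $\ell_{1:T}$ of $[0,1]$-valued loss functions such that \emph{every} deterministic learner suffers expected movement regret $\wt\Omega((ck)^{1/3}T^{2/3})$ against $\mathcal{D}$, and then extract a fixed worst-case sequence for an arbitrary randomized learner by averaging. Since the metric is uniform with all pairwise distances equal to $c$, the learner's total movement cost is exactly $c\cdot S$ where $S=\#\{t\ge2:i_t\ne i_{t-1}\}$ counts its switches, so it suffices to track $\E[S]$ separately. We may assume $k\ge4$ (for $k\in\{2,3\}$ run the same construction on two of the arms, forcing the rest to carry constant loss $1$, where $(ck)^{1/3}T^{2/3}=\Theta(c^{1/3}T^{2/3})$) and $ck\le T$ (otherwise the asserted bound exceeds the trivial $O(T)$ ceiling on the regret and is vacuous).

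The construction of $\mathcal{D}$ follows \citet{dekel2014bandits}. Draw a ``good'' arm $\chi$ uniformly from $K$, fix a gap $\rho\in(0,\tfrac14)$ to be tuned, and set $\ell_t(i)=\tfrac12(1+W_t)-\rho\,\ind{i=\chi}$, where $(W_t)_{t=1}^T$ is the \emph{multi-scale random walk} of \citet{dekel2014bandits} --- a centered Gaussian process built from $\Theta(\log T)$ scales of geometrically growing temporal resolution, with $O(1)$ total variance calibrated so that $W_t\in[-\tfrac14,\tfrac14]$, and hence $\ell_t(i)\in[0,1]$, with probability $1-o(1)$ (the rare failure event contributes only lower-order terms). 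Two facts drive the proof. First, a regret decomposition: comparing against the fixed arm $\chi$ in place of the in-hindsight optimum (a valid relaxation, since $\sum_t\ell_t(\chi)\ge\min_i\sum_t\ell_t(i)$) and using that $W_t$ cancels between arms, so that $\ell_t(i_t)-\ell_t(\chi)=\rho\,\ind{i_t\ne\chi}$ identically, gives $\E_{\mathcal{D}}[\mregret(\ell_{1:T},\dist)]\ge\rho\cdot\E[\#\{t:i_t\ne\chi\}]+c\,\E[S]$. Second --- the crux --- an information bound: with $\tau$ the learner's transcript (all observed losses plus its internal coins) and $I(\cdot;\cdot)$ mutual information, $I(\chi;\tau)=O\!\big(\tfrac{\rho^2\,\mathrm{polylog}(T)}{k}\,\E[S]\big)$.

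I would prove the information bound as follows. Averaging the hypotheses, $I(\chi;\tau)\le\tfrac1k\sum_{j\in K}\kl{P_j}{P_0}$, where $P_j$ is the transcript law under $\chi=j$ and $P_0$ that of the shift-free process ($\rho=0$). Because $P_j$ and $P_0$ differ only on rounds where arm $j$ is played, the chain rule localizes $\kl{P_j}{P_0}$ to those rounds; grouping them into the maximal \emph{contiguous blocks} on which the learner rests on arm $j$, each such block contributes the KL cost of translating the corresponding sub-vector of the Gaussian process by the constant $\rho$. The defining property of the multi-scale construction is precisely that this block-shift cost is $O(\rho^2\,\mathrm{polylog}(T))$ \emph{independently of the block's length} --- for an ordinary random walk a short early block would instead leak $\Theta(\rho^2 T)$ nats, which is why the multi-scale structure is essential. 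Proving this uniformly is the main obstacle, compounded by the block partition being chosen \emph{adaptively} by the learner; I would address it by telescoping the KL along the transcript filtration and reducing to the per-step statement that the walk's increments between \emph{any} two times have variance $\Omega(1/\mathrm{polylog}(T))$, so that each switch reveals only $O(\rho^2\,\mathrm{polylog}(T))$ nats about $\chi$. As the total number of blocks over all arms is at most $\E[S]+1$, the bound follows.

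To conclude, combine the two facts. A standard Pinsker/Fano estimate upgrades the information bound to $\E[\#\{t:i_t=\chi\}]\le\tfrac{T}{k}+T\sqrt{I(\chi;\tau)/2}$, so if $I(\chi;\tau)\le\tfrac1{32}$ then for $k\ge4$ the learner plays a sub-optimal arm on at least $\tfrac T2$ rounds in expectation, whence $\mregret=\wt\Omega(\rho T)$. By the information bound this holds whenever $\E[S]\le c_0k/(\rho^2\,\mathrm{polylog}(T))$ for a suitable constant $c_0>0$; and if instead $\E[S]>c_0k/(\rho^2\,\mathrm{polylog}(T))$, then the movement term alone yields $\mregret\ge c\,\E[S]=\wt\Omega(ck/\rho^2)$. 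Thus $\mregret=\wt\Omega(\min\{\rho T,\;ck/\rho^2\})$ in all cases, and choosing $\rho=\Theta((ck/T)^{1/3})$ --- which lies in $(0,\tfrac14)$ since $ck\le T$ --- balances the two terms to give $\mregret=\wt\Omega((ck)^{1/3}T^{2/3})$, as claimed.
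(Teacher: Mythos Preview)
The paper does not prove this theorem: it is stated as a recollection of the lower bound of \citet{dekel2014bandits} and invoked as a black box in the proof of \cref{thm:lower}. There is therefore no ``paper's own proof'' to compare against.

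Your proposal is a faithful sketch of the original argument of \citet{dekel2014bandits}, with the (straightforward) adaptation that the unit switching cost is replaced by a uniform cost $c$, so that the movement term becomes $c\,\E[S]$ and the final balancing yields $\rho=\Theta((ck/T)^{1/3})$ rather than $\Theta((k/T)^{1/3})$. The main ingredients---the multi-scale random walk, the per-block $O(\rho^2\,\mathrm{polylog}(T))$ KL leakage independent of block length, and the Fano/Pinsker wrap-up---are identified correctly, and the case distinctions ($k$ small, $ck>T$) are handled appropriately. The one place where you are appropriately cautious is the information bound: the uniform control on the KL cost of an adaptively chosen block is indeed the technical heart of \citet{dekel2014bandits}, and your reduction to the per-increment variance lower bound $\Omega(1/\mathrm{polylog}(T))$ is the right idea, though a full proof requires carefully tracking the filtration and the specific covariance structure of the multi-scale walk. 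As a proof \emph{sketch} that names the key obstacle and the strategy for overcoming it, this is sound.
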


Note that for a discrete metric the minimum covering of $k$ points with balls of radius $c<1$ is by $k$ balls, hence $\covering_{c}(\dist)=k$. Thus we see that \cref{thm:koren} already gives \cref{thm:lower} for the special case of a unit-cost metric (up to logarithmic factors).
The general case can be derived by embedding the lower bound construction in an action set that constitute a $c$-packing of size $\packing_c(\dist)$.

\begin{proof}[Proof of \cref{thm:lower} (sketch)]
First, it is easy to see that the adversary can always force a regret of $\Omega(\sqrt{kT})$; indeed, this lower bound applies for the MAB problem even when there is no movement cost between actions \citep{auer2002nonstochastic}.
We next show a regret lower bound of $\Omega(\diml^{1/3} T^{2/3})$.
By definition, there exist $\eps$ such that $\diml = \eps \packing_{\eps}(\dist)$.
Let $\B_\eps(i_1),\ldots,\B_\eps(i_n)$ be a set of balls that form a maximal packing with $n=\packing_{\eps}(\dist)$, and observe that $\dist(i,i') \ge \eps$ for all $i,i' \in \set{i_1,\ldots,i_n}$, $i \ne i'$. Since we assume the diameter of the metric space is exactly $1$ we have that for all $\epsilon<1$, $\packing_{\eps}(\dist)\ge 2$. Therefore we may assume $n\ge 2$.
We can now use \cref{thm:koren} to show that for any algorithm, one can construct a sequence $\ell_1,\ldots,\ell_T$ of loss functions supported on $i_1,\ldots,i_n$ (and extend them to the entire domain $K$ by assigning a maximal loss of $1$ to any $i \notin \set{i_1,\ldots,i_n}$) such that
\begin{align*}
\mregret(\ell_{1:T},\dist)
=
\Omega\Lr{ (\eps n)^{1/3} T^{2/3} }
=
\Omega\Lr{ \diml^{1/3} T^{2/3} }
.&\qedhere
\end{align*}
\end{proof}

\section{Analysis of SMB for General HSTs}
\label{sec:smb-analysis}

In this section, we extend the analysis given in \cite{koren2017bandits} for the SMB algorithm (\cref{alg:smb}) to general HST metrics over finite action sets,
and prove the following theorem.

\begin{theorem} \label{thm:main}
Assume that the metric $\dist = \distT$ is a metric specified by a tree $\cT$ which is a HST with $\depth(\cT) = H$ and \upperlow $\dim(\cT)=\dim$.
Then, for any sequence of loss functions $\ell_1,\ldots, \ell_T$, \cref{alg:smb} guarantees that
\begin{align*}
\mregret(\ell_{1:t},\distT)
=
\O\LR{ \frac{H\log{\dim}}{\eta} + \eta \dim H 2^H T + H 2^{-H} T }
.
\end{align*}
In particular, by setting $\eta=\Theta\Lr{\sqrt{2^{-H}\log(\dim)/\dim T}}$, the bound on the expected movement regret of the algorithm becomes
\begin{align*}
\mregret(\ell_{1:t},\distT)
=
\O\lr{ H \sqrt{T 2^H \dim\log{\dim}} + H 2^{-H} T }
.
\end{align*}
\end{theorem}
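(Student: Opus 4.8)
The plan is to adapt the level-wise potential analysis of the SMB algorithm from \cite{koren2017bandits}, keeping the skeleton of that argument but replacing every place it exploited the complete-binary structure (exactly two children per node, exactly $2^{H-h}$ leaves below a level-$h$ node) by the general quantities $N_h:=\covering_{2^{h-H}}(\distT)$, which we control through the complexity $\dim$. As a first step I would write the movement regret as $\bigl(\EE{\sum_t\ell_t(i_t)}-\min_i\sum_t\ell_t(i)\bigr)+\EE{\sum_{t=2}^T\distT(i_{t-1},i_t)}$ and bound the two contributions separately.

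The movement cost is the easy half. The rebalancing step is engineered so that $p_{t+1}(A_{h_t}(i))=p_t(A_{h_t}(i))$ for all $i$, and since $i_{t+1}$ is drawn from $p_{t+1}(\cdot\mid A_{h_t}(i_t))$ it lies in the same level-$h_t$ cell as $i_t$, so $\distT(i_t,i_{t+1})\le 2^{h_t-H}$. As $h_t=\min\set{h:\sig_{t,h}<0}$ for i.i.d.\ uniform signs, $\Pr[h_t=h]\le 2^{-h}$, hence $\EE{2^{h_t}}=O(H)$ and $\EE{\distT(i_t,i_{t+1})}=O(H2^{-H})$, which sums to the $O(H2^{-H}T)$ term. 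For the first contribution I would record that the conditional sampling together with the marginal-preservation property implies $i_t\sim p_t$ marginally (an induction over $t$ using $p_t(A_{h_{t-1}}(i))=p_{t-1}(A_{h_{t-1}}(i))$), so $\EE{\sum_t\ell_t(i_t)}=\EE{\sum_t\langle p_t,\ell_t\rangle}$ and it suffices to bound the pseudo-loss regret $\EE{\sum_t\langle p_t,\ell_t\rangle}-\min_i\sum_t\ell_t(i)$.

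The core is a multiplicative-weights argument on the potential $\Phi_T=-\tfrac1\eta\ln\sum_i p_1(i)e^{-\eta\sum_{s\le T}\tell_s(i)}$, which by the update rule reduces matters to two estimates. (i) \emph{Near-unbiasedness:} I would show by induction on $h$ that the recursively defined $\bell_{t,h}$ are nonnegative, constant on each level-$h$ cell, and satisfy $\E_t[\tell_t(i)]=\ell_t(i)$ for every $i\notin E_t$, unwinding the nested log-sum-exp by using that the factor $1+\sig_{t,h-1}\in\set{0,2}$ turns each step of the recursion into a conditional expectation over a fresh sign; the bias contributed by the exclusion event $\set{i_t\in E_t}$ (on which $\tell_t=0$) is $O(\eta N_0)=O(\eta\dim2^H)$ per round and is absorbed below. (ii) \emph{Second-order term:} on $\set{i_t\notin E_t}$ the definition of $E_t$ forces $p_t(A_h(i_t))\ge 2^h\eta$ for all $h$, and a short direct computation then gives $\E_t\bigl[\sum_i p_t(i)\bell_{t,h}(i)^2\bigr]=O(N_h)$; combining this with $\tell_t=\bell_{t,0}+\sum_{h=0}^{H-1}\sig_{t,h}\bell_{t,h}$ and $(\sum_{h=0}^{H-1}x_h)^2\le H\sum_h x_h^2$ yields $\E_t\bigl[\sum_i p_t(i)\tell_t(i)^2\bigr]=O\bigl(H\sum_h N_h\bigr)$. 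The defining inequality $\eps\covering_\eps(\distT)\le\dim$ gives $N_h\le\dim2^{H-h}$, so $\sum_h N_h=O(\dim2^H)$ and the second-order contribution to the regret is $O(\eta H\dim2^HT)$; together with the leading potential term $O(H\log\dim/\eta)$ (obtained by telescoping $\Phi$ across the $H$ levels and bounding the per-level log-cardinalities by covering numbers, using $k=N_0\le\dim2^H$) this gives the first displayed bound, and substituting $\eta=\Theta(\sqrt{2^{-H}\log\dim/(\dim T)})$ and adding back the $H2^{-H}T$ movement term gives the second.

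I expect the main obstacle to be step (i): carrying the induction on the recursively defined softmax estimators through a tree with arbitrary, non-uniform branching, where the clean ``two children / $2^{H-h}$ leaves'' identities of the binary case are unavailable and every counting and probability-mass estimate must instead be routed through the covering numbers of $\distT$ and the level-wise lower bounds supplied by $E_t$. Concretely, the unbiasedness computation requires unwinding $H$ nested log-sum-exp layers while tracking an $O(\eta)$ multiplicative error and an $O(\eta)$ exclusion-event error introduced at each level, and verifying that these accumulate only to the stated lower-order terms; the second-moment estimate in (ii) demands the same care but, once (i) is established, is essentially an inductive repetition of the one-level computation.
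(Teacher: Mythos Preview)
Your plan is the paper's: control movement via the distribution of $h_t$, then run a second-order multiplicative-weights bound on $\tell_t$ with bias and variance governed by the covering numbers $N_h$ and the inequality $N_h\le\dim\cdot2^{H-h}$. The paper packages this last inequality as $\sum_h\sum_{i\in K}2^h/|A_h(i)|\le H\,2^H\dim$ (its eq.~\eqref{eq:biasvar}), and that single estimate drives both the bias and the variance lemmas.

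A few quantitative points deserve correction. The per-level second moment is $\E_t[p_t\cdot\bell_{t,h}^2]=O(2^h N_h)$, not $O(N_h)$: by \cref{lem:bell}, $\bell_{t,h}$ carries the factor $\prod_{j<h}(1+\sig_{t,j})$, and squaring then averaging over the independent signs leaves a net $2^h$. Conversely, your Cauchy--Schwarz step is wasteful and unnecessary: $\sig_{t,h}$ is independent of $\bell_{t,h}$ (which depends only on $\sig_{t,0},\ldots,\sig_{t,h-1}$), so all cross terms in $\E[\tell_t^2]$ vanish and one gets $\E[\tell_t^2]\le 2\sum_h\E[\bell_{t,h}^2]$ directly. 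Your two deviations happen to cancel (a missing $2^h$ offset by an extra $H$), so the final bound comes out right, but each step should be correct on its own. The bias has the same structure: $\Pr[i_t\in E_t]$ is a union over all levels, and Markov at level $h$ gives $\Pr[p_t(A_h(i_t))<2^h\eta]\le 2^h\eta\,N_h$, so the bound is $\eta\sum_h 2^h N_h\le\eta H\,2^H\dim$ rather than $O(\eta N_0)$. Finally, no level-wise telescoping is needed for the $O(H\log\dim/\eta)$ term; it comes straight from the MW potential drop $\ln k/\eta$ together with $k=N_0\le 2^H\dim$.
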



The main new ingredients in the generalized proof are bounds on the bias and the variance of the loss estimates $\tell_t$ used by \cref{alg:smb}, which we give in the following two lemmas.
In the proof of both, we require the following inequality:
\begin{align} \label{eq:biasvar}
\frac{1}{H} \sum_{h=0}^{H-1} \sum_{i \in K} \frac{2^h}{\abs{A_h(i)}}
\le
2^H \dim
.
\end{align}
This follows from the fact that $\sum_{i \in K} \abs{A_h(i)}^{-1}$ equals $\covering(\distT,2^{h-H})$ (both quantities are equal to the number of nodes in the $h$'th level of $\cT$), and since $2^{h-H} \covering(\distT,2^{h-H}) \le \dimu(\distT) = \dim$ by definition of the (covering) \upperlow of $\cT$.

We begin with bounding the bias of the estimator $\tell_t$ from the true loss vector $\ell_t$.

\begin{lemma} \label{lem:unbiased}
For all $t$, we have $\E[\tell_t(i)] \le \ell_t(i)$ and $\E[\ell_t(i_t)] \le\E[p_t \cdot \tell_t] + \eta H 2^H \dim$.
\end{lemma}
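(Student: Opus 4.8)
The plan is to follow the template of the analysis in \cite{koren2017bandits}, the only genuinely new ingredient being that the sets $A_h(i)$ now vary in size, so every place where the balanced-tree argument uses the identity $\abs{A_h(i)}=2^h$ must instead be kept as an explicit sum over levels and controlled through \cref{eq:biasvar}. Concretely, I would fix a round $t$ and condition on the sampling distribution $p_t$ (which is determined before the round-$t$ randomization and also fixes the truncation set $E_t$), treating the remaining randomness as the draw $i_t\sim p_t(\cdot\mid A_{h_{t-1}}(i_{t-1}))$ together with the fresh Rademacher signs $\sig_{t,0},\dots,\sig_{t,H-1}$, which are independent of $i_t$ given the history.

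For the first inequality, the key observation is that the signs do not affect the \emph{mean} of the estimator, so the whole log-sum-exp recursion (which only matters for the variance, i.e.\ the next lemma) can be ignored here. Writing $\hell_t \eqdef \bell_{t,0}+\sum_{h=0}^{H-1}\sig_{t,h}\bell_{t,h}$ for the untruncated estimate, so that $\tell_t=\ind{i_t\notin E_t}\,\hell_t$, each $\bell_{t,h}$ is a deterministic function of $i_t$ and of $\sig_{t,0},\dots,\sig_{t,h-1}$ only; hence $\sig_{t,h}$ is independent of $\bell_{t,h}$, so $\E[\sig_{t,h}\bell_{t,h}\mid p_t,i_t]=0$ and $\E[\hell_t\mid p_t,i_t]=\bell_{t,0}$. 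Consequently
\[
\E[\tell_t(i)\mid p_t] \eq \ind{i\notin E_t}\,\frac{\ell_t(i)}{p_t(i)}\,\Pr[i_t=i\mid p_t] .
\]
The remaining ingredient, which I would import from \cite{koren2017bandits}, is the marginal-preservation property of the algorithm: the update from $p_{t-1}$ to $p_t$ keeps $p_t(A_{h_{t-1}}(\cdot))=p_{t-1}(A_{h_{t-1}}(\cdot))$, and propagating this by induction back to the uniform $p_1$ yields $\Pr[i_t=i\mid p_t]=p_t(i)$. Substituting gives $\E[\tell_t(i)\mid p_t]=\ind{i\notin E_t}\,\ell_t(i)\leq\ell_t(i)$ (using $\ell_t\ge0$), and taking expectations over $p_t$ finishes the first part.

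For the second inequality I would reuse the same two facts. Since $p_t\cdot\bell_{t,0}=\ell_t(i_t)$ by construction of $\bell_{t,0}$, and $\E[\hell_t\mid p_t,i_t]=\bell_{t,0}$, we get $\E[p_t\cdot\tell_t\mid p_t,i_t]=\ind{i_t\notin E_t}\,\ell_t(i_t)$, hence
\[
\E[\ell_t(i_t)] - \E[p_t\cdot\tell_t] \eq \EE{\ind{i_t\in E_t}\,\ell_t(i_t)} \leq \Pr[i_t\in E_t] .
\]
It then remains to show $\Pr[i_t\in E_t]\leq\eta H 2^H\dim$. By the marginal property, $\Pr[i_t\in E_t\mid p_t]=p_t(E_t)$; for each level $h$ the set $\set{i : p_t(A_h(i))<2^h\eta}$ is a union of level-$h$ groups each of $p_t$-mass strictly below $2^h\eta$, and the number of such groups is $\covering_{2^{h-H}}(\distT)=\sum_{i\in K}\abs{A_h(i)}^{-1}$, so a union bound over $h=0,\dots,H-1$ gives $p_t(E_t)\leq\eta\sum_{h=0}^{H-1}2^h\covering_{2^{h-H}}(\distT)\leq\eta H 2^H\dim$, the last step being exactly \cref{eq:biasvar}.

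The step I expect to require the most care is the marginal-preservation bookkeeping: verifying that conditioning on $p_t$ alone — rather than on the full history, which would leave the undesired factor $p_t(A_{h_{t-1}}(i_{t-1}))^{-1}$ — genuinely restores $i_t\sim p_t$, and that both this invariant and the counting identity $\sum_{i}\abs{A_h(i)}^{-1}=\covering_{2^{h-H}}(\distT)$ go through for an arbitrary HST rather than a complete binary tree. This is precisely where \cref{eq:biasvar} does the work that the identity $\abs{A_h(i)}=2^h$ did in \cite{koren2017bandits}, and it is the same adaptation needed throughout the remainder of the SMB analysis.
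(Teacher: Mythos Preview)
Your approach mirrors the paper's: both reduce the second inequality to bounding $\beta_t=\Pr[i_t\in E_t]$ by a union bound over levels and then invoke \cref{eq:biasvar}. The paper omits the first inequality (citing \cite{koren2017bandits}) and, for $\beta_t$, applies Markov's inequality to $1/p_t(A_h(i_t))$ together with the unconditional identity \cref{eq:property}; your direct counting of small-mass level-$h$ groups reaches the same $\eta\sum_h 2^h\covering_{2^{h-H}}(\distT)$ bound and is arguably more transparent.

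The one point to watch is precisely the one you flag: your arguments as written use the \emph{conditional} statement $\Pr[i_t=i\mid p_t]=p_t(i)$, which is strictly stronger than the unconditional \cref{eq:property} that the paper imports, and it is not obvious that the marginal-preservation induction yields it (conditioning on $p_t$ can leak information about $i_{t-1}$ and $h_{t-1}$). The paper's Markov route sidesteps this entirely by working with unconditional expectations. Your counting bound is easily adapted to need only \cref{eq:property}: for each level-$h$ group $G$,
\[
\E\bigl[\ind{p_t(G)<2^h\eta}\,\ind{i_t\in G}\bigr]\le 2^h\eta\,\E\bigl[\ind{i_t\in G}/p_t(G)\bigr]=2^h\eta,
\]
and similarly the first inequality follows from $\E[\ind{i\notin E_t}\,\ind{i_t=i}/p_t(i)]\le \E[\ind{i_t=i}/p_t(i)]=1$ without ever conditioning on $p_t$.
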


\begin{proof}
The proof of the first inequality is identical to the one found in \cite{koren2017bandits} and thus omitted.
%
To bound $\E[\ell_t(i_t)]$, observe that $\E[p_t \cdot \tell_t \mid i_t \in E_t] = 0$ and
\begin{align*}
\E[p_t \cdot \tell_t \mid i_t \notin E_t]
=
\E[p_t \cdot \bell_{t,0} \mid i_t \notin E_t] + \sum_{h=0}^{H-1} \E[\sig_{t,h}] \, \E[p_t \cdot \bell_{t,h} \mid i_t \notin E_t]
=
\E[\ell_t(i_t) \mid i_t \notin E_t]
.
\end{align*}
Then, denoting $\beta_t= \Pr\left[i_t\in E_t\right]$, we have
\begin{align*}
\E[\ell_t(i_t)]
&=
\beta_t \E[\ell_t(i_t) \mid i_t \in E_t] + (1-\beta_t) \E[\ell_t(i_t) \mid i_t \notin E_t]
\\
&\le
\beta_t + (1-\beta_t) \E[p_t \cdot \tell_t \mid i_t \notin E_t]
\\
&=
\beta_t + \E[p_t \cdot \tell_t]
,
\end{align*}
where for the inequality we used the fact that $\ell_t(i_t) \le 1$.

To complete the proof, we have to show that $\beta_t \le \eta H 2^H \dim$.
To this end, write
\begin{align*}
\beta_t
=
\Pr[i_t \in E_t]
\le
\sum_{h=0}^{H-1} \Pr[p_t(A_h(i_t)) < 2^h\eta]
.
\end{align*}
Using \cref{eq:property} to write
\begin{align*}
\EE{ \frac{1}{p_t(A_h(i_t))} }
=
\sum_{i \in K} \frac{1}{\abs{A_h(i)}} \EE{ \frac{\ind{i_t \in A_h(i)}}{p_t(A_h(i))} }
=
\sum_{i \in K} \frac{1}{\abs{A_h(i)}}
,
\end{align*}
together with Markov's inequality, we obtain
\begin{align*}
\Pr\!\big[ p_t(A_h(i_t)) < 2^h \eta \big]
=
\Pr\lrbra{ \frac{1}{p_t(A_h(i_t))} > \frac{1}{2^h \eta} }
\le
\eta \sum_{i \in K} \frac{2^h}{\abs{A_h(i)}}
.
\end{align*}
Using \cref{eq:biasvar}, we conclude that
\begin{align*}
\beta_t
\le
\eta \sum_{h=0}^{H-1} \sum_{i \in K} \frac{2^h}{\abs{A_h(i)}}
\le
\eta H 2^H \dim
.&\qedhere
\end{align*}
\end{proof}

We proceed to control the variance of the estimator $\tell_t$.

\begin{lemma} \label{lem:variance}
For all $t$, we have $\E[p_t \cdot \tell_t^2] \le 2H 2^H \dim$.
\end{lemma}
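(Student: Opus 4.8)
The plan is to follow the template of the proof of \cref{lem:unbiased}: fix a round $t$, condition on the history through time $t$ (so $p_t$ and the conditional law of $i_t$ are fixed), reduce to a bound that is pointwise in $i_t$, and then average over $i_t$ using the identity $\E[\ind{i_t\in A_h(i)}/p_t(A_h(i))]=1$ (this is \cref{eq:property}, exactly as in \cref{lem:unbiased}) followed by the combinatorial inequality \cref{eq:biasvar}. The target is to show that, conditionally on $i_t$ (and since $\tell_t\equiv0$ when $i_t\in E_t$, we may assume $i_t\notin E_t$),
\[
\EE{ p_t\cdot\tell_t^2 \mid i_t }
\;=\;
\O\lr{ \sum_{h=0}^{H-1} \frac{2^{h}}{p_t(A_h(i_t))} } ;
\]
averaging over $i_t$ then gives $\E[p_t\cdot\tell_t^2]=\O\lr{\sum_h\sum_i 2^h/\abs{A_h(i)}}=\O(H2^H\dim)$ by \cref{eq:biasvar}, which is the claim.

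To get the conditional bound I would first record the locality structure of the estimators produced by \cref{alg:smb}. For every $h$, the vector $\bell_{t,h}$ is constant on level-$h$ blocks of $\cT$ and supported on $A_h(i_t)$ (a node whose level-$h$ block misses $i_t$ has all its level-$(h{-}1)$ descendants carrying zero $\bell_{t,h-1}$-value, so its log-sum-exp collapses to $-\tfrac1\eta\ln1=0$); moreover $\bell_{t,h}\equiv0$ for $h>h_t$ (the factor $1+\sig_{t,h-1}$ kills everything below the first negative sign), and for $h\le h_t$ the value $\bell_{t,h}(i_t)$ equals a deterministic quantity $\bar\bell_h$ (its recursion uses only $+1$ signs). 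Hence $\tell_t$ is supported on $A_{h_t}(i_t)$ and is constant on each annulus $A_m(i_t)\setminus A_{m-1}(i_t)$, with value there $\sum_{h=m}^{h_t-1}\bar\bell_h-\bar\bell_{h_t}$ (the $\bell_{t,0}$-coefficient being doubled on $\{i_t\}$). Substituting this into $p_t\cdot\tell_t^2=\sum_m p_t(A_m(i_t)\setminus A_{m-1}(i_t))\cdot(\text{value on annulus }m)^2$, expanding the square and interchanging the sum over annuli with the sum over pairs of levels makes $\sum_{m\le h}p_t(A_m(i_t)\setminus A_{m-1}(i_t))=p_t(A_h(i_t))$ appear in front of each product $\bar\bell_h\bar\bell_{h'}$ (with $h\le h'$); this Abel-summation step is what keeps only a single factor of~$H$ in the end.

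The technical heart, and the step I expect to be the main obstacle, is the magnitude estimate $\bar\bell_h=\O\lr{2^h/p_t(A_h(i_t))}$ for $i_t\notin E_t$: granted this, bounding $\bar\bell_h\bar\bell_{h'}\le\O\lr{2^h2^{h'}/(p_t(A_h(i_t))\,p_t(A_{h'}(i_t)))}$ in the Abel-summed expression cancels the leading $p_t(A_h(i_t))$, leaves the geometric sum $\sum_{h\le h'}2^h2^{h'}\le2\cdot4^{h'}$, and---using $\Pr[h_t\ge h']\le2^{-h'}$---collapses to $\O\lr{\sum_{h'}2^{h'}/p_t(A_{h'}(i_t))}$, as desired. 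To prove the estimate I would unroll the log-sum-exp recursion: with $z_h=e^{-\eta\bar\bell_h}$ and $a_h=p_t(A_{h-1}(i_t))/p_t(A_h(i_t))\le1$ it becomes the scalar identity $z_h=a_hz_{h-1}^2+(1-a_h)$, $z_0=e^{-\eta\ell_t(i_t)/p_t(i_t)}$, which telescopes to $1-z_h\le2^h\tfrac{p_t(i_t)}{p_t(A_h(i_t))}(1-z_0)\le\tfrac{2^h\eta\ell_t(i_t)}{p_t(A_h(i_t))}$. Here the truncation $E_t$ enters crucially: off $E_t$, $p_t(A_h(i_t))\ge2^h\eta$ for all $h$, which keeps $z_h$ bounded away from $0$ and yields $\bar\bell_h=\O\lr{(1-z_h)/\eta}=\O\lr{2^h/p_t(A_h(i_t))}$. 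The delicate point---which I expect needs the most care, and is of the same flavor as the technical modifications of \cite{koren2017bandits} required for general HSTs---is the boundary regime $p_t(A_h(i_t))\asymp2^h\eta$, where one must argue directly from the quadratic recursion (exploiting that $z_h$ is non-decreasing whenever $a_h\le\tfrac12$) rather than through the telescoped bound alone, in order to keep $z_h$ away from its worst case. Once the conditional bound is in place, the outer expectation over $i_t$ and the appeal to \cref{eq:biasvar} are routine, exactly as in \cref{lem:unbiased}.
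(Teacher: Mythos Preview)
Your proposal is workable in outline but takes a far more complicated route than the paper, and the step you flag as the ``technical heart'' is not actually an obstacle at all.

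The paper's proof exploits the Rademacher structure directly. Since $\tell_t^2(i)\le\bigl(\bell_{t,0}(i)+\sum_{h}\sig_{t,h}\bell_{t,h}(i)\bigr)^2$ and, for each $h$, the sign $\sig_{t,h}$ is independent of $i_t$ and of $\sig_{t,0},\ldots,\sig_{t,h-1}$ (hence of $\bell_{t,0},\ldots,\bell_{t,h}$), every cross term in the expanded square has zero expectation. One obtains at once
\[
\E\bigl[\tell_t^2(i)\bigr]
\;\le\;
\E\bigl[\bell_{t,0}^2(i)\bigr]+\sum_{h=0}^{H-1}\E\bigl[\bell_{t,h}^2(i)\bigr]
\;\le\;
2\sum_{h=0}^{H-1}\E\bigl[\bell_{t,h}^2(i)\bigr].
\]
No annulus decomposition, no Abel summation, no handling of products $\bar\bell_h\bar\bell_{h'}$. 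From here \cref{lem:bell} gives $p_t\cdot\bell_{t,h}^2\le p_t(A_h(i_t))^{-1}\prod_{j<h}(1+\sig_{t,j})^2$; the product of independent Rademacher factors has expectation $2^h$, and rewriting $p_t(A_h(i_t))^{-1}=\sum_{i}|A_h(i)|^{-1}\,\ind{i_t\in A_h(i)}/p_t(A_h(i))$ followed by \cref{eq:property} yields $\E[p_t\cdot\bell_{t,h}^2]\le\sum_i 2^h/|A_h(i)|$. Summing over $h$ and invoking \cref{eq:biasvar} finishes the proof in a few lines.

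Two further remarks on your plan. First, the magnitude estimate $\bar\bell_h\le 2^h/p_t(A_h(i_t))$ that you identify as the main obstacle is precisely \cref{lem:bell} evaluated at $i=i_t$ with $\sig_{t,0}=\cdots=\sig_{t,h-1}=+1$; there is no need to re-derive it from the scalar recursion $z_h=a_hz_{h-1}^2+(1-a_h)$, and the boundary difficulty you anticipate at $p_t(A_h(i_t))\asymp 2^h\eta$ simply does not arise. Second, your device of using $\Pr[h_t\ge h']\le 2^{-h'}$ to convert a $4^{h'}$ into a $2^{h'}$ is morally the same mechanism as the paper's $\E\bigl[\prod_{j<h}(1+\sig_{t,j})^2\bigr]=2^h$: both encode that $\bell_{t,h}$ is nonzero only on an event of probability $2^{-h}$, on which its size scales like $2^h$. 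The paper packages this much more cleanly by keeping the signs inside the expectation and letting orthogonality kill the cross terms, rather than conditioning on $h_t$ and then reassembling.
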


\begin{proof}
We begin by bounding
\begin{align*}
\tell_t^2(i)
\le
\Lr{ \bell_{t,0}(i) + \sum_{h=0}^{H-1} \sig_{t,h} \bell_{t,h}(i) }^2
.
\end{align*}
Since $\E[\sig_{t,h}] = 0$ and $\E[\sig_{t,h} \sig_{t,h'}] = 0$ for all $h \ne h'$, we have for all $i$ that
\begin{align} \label{eq:var1}
\E[\tell_t^2(i)]
=
\E[\tell_{t,0}^2(i)] +
\sum_{h=0}^{H-1} \E[ \bell_{t,h}^2(i) ]
\le
2\sum_{h=0}^{H-1} \E[ \bell_{t,h}^2(i) ]
.
\end{align}
Following \cite{koren2017bandits}, we have for all $h$ by \cref{lem:bell} that
\begin{align*}
p_t \cdot \bell_{t,h}^2
&\le
\frac{\sum_{i \in K} p_t(i) \ind{i_t \in A_h(i)}}{p_t(A_h(i_t))^2} \prod_{j=0}^{h-1} (1+\sig_{t,j})^2
\\
&=
\frac{1}{p_t(A_h(i_t))} \prod_{j=0}^{h-1} (1+\sig_{t,j})^2
\\
&=
\sum_{i \in K} \frac{1}{\abs{A_h(i)}} \frac{\ind{i_t \in A_h(i)}}{p_t(A_h(i))} \prod_{j=0}^{h-1} (1+\sig_{t,j})^2
.
\end{align*}
Now, since $i_t$ is independent of the $\sig_{t,j}$, and recalling \cref{eq:property}, we get
\begin{align*}
\E_t[p_t \cdot \bell_{t,h}^2]
\le
\sum_{i \in K} \frac{1}{\abs{A_h(i)}} \EE{ \frac{\ind{i_t \in A_h(i)}}{p_t(A_h(i))} } \prod_{j=0}^{h-1} \E[(1+\sig_{t,j})^2]
=
\sum_{i \in K} \frac{2^h}{\abs{A_h(i)}}
.
\end{align*}
This, combined with \cref{eq:var1,eq:biasvar}, gives the result:
\begin{align*}
\E[p_t \cdot \tell_t^2]
\le
2\sum_{h=0}^{H-1} \E[p_t \cdot \bell_{t,h}^2]
\le
2 \sum_{h=0}^{H-1} \sum_{i \in K} \frac{2^h}{\abs{A_h(i)}}
\le
2H 2^H \dim
&.\qedhere
\end{align*}
\end{proof}

\subsection{Additional Lemmas from \cite{koren2017bandits}}

We state several lemmas proved in \cite{koren2017bandits} that are required for our generalized analysis; we refer to the original paper for the proofs.



\begin{lemma} \label{lem:bell}
For all $t$ and $0 \le h < H$ the following holds almost surely:
\begin{align} \label{eq:bell1}
0 \le \bell_{t,h}(i) \le \frac{\ind{i_t \in A_h(i)}}{p_t(A_h(i))}
\prod_{j=0}^{h-1} (1+\sig_{t,j}) \qquad \forall ~ i \in K \,.
\end{align}
In particular, if $\sig_{t,j} = -1$ then $\bell_{t,h} = 0$ for all $h > j$.
As a result,
\begin{align} \label{eq:tell-equiv}
\tell_t
=
\bell_{t,0} - \bell_{t,h_t} + \sum_{j=0}^{h_t-1} \bell_{t,j}
.
\end{align}
\end{lemma}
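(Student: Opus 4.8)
The plan is to establish the two-sided bound \eqref{eq:bell1} by induction on the level $h$, and then read off the vanishing property and the identity \eqref{eq:tell-equiv} as corollaries. For the base case $h=0$ note that $A_0(i)=\{i\}$ — in an HST metric distinct leaves are at distance at least $2^{-H+1}$, so the ball of radius $2^{-H}$ is a singleton — and hence $\bell_{t,0}(i)=\ind{i_t=i}\,\ell_t(i_t)/p_t(i)$ satisfies $0\le\bell_{t,0}(i)\le\ind{i_t\in A_0(i)}/p_t(A_0(i))$ because $\ell_t(i_t)\in[0,1]$ and the empty product equals $1$.

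For the inductive step I would assume \eqref{eq:bell1} at level $h-1$ and argue the two bounds separately. The lower bound $\bell_{t,h}(i)\ge 0$ is immediate: since $\bell_{t,h-1}(j)\ge 0$ and $1+\sig_{t,h-1}\ge 0$, each exponential appearing in the definition of $\bell_{t,h}(i)$ is at most $1$, so the convex combination inside the logarithm is at most $1$ and the logarithm is nonpositive. For the upper bound I would apply Jensen's inequality to the convex map $x\mapsto e^{-x}$ against the convex weights $p_t(j)/p_t(A_h(i))$, $j\in A_h(i)$, which linearizes the log-sum-exp into $\bell_{t,h}(i)\le(1+\sig_{t,h-1})\sum_{j\in A_h(i)}\frac{p_t(j)}{p_t(A_h(i))}\bell_{t,h-1}(j)$. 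Plugging in the induction hypothesis and using that in an HST the classes $A_{h-1}(\cdot)$ refine the classes $A_h(\cdot)$, the factor $\ind{i_t\in A_{h-1}(j)}$ is supported on the single class $A_{h-1}(i_t)$, and that class is contained in $A_h(i)$ precisely when $i_t\in A_h(i)$; summing $p_t(j)$ over $j\in A_{h-1}(i_t)$ then cancels the $1/p_t(A_{h-1}(i_t))$ carried by the hypothesis, leaving $\sum_{j}\frac{p_t(j)}{p_t(A_h(i))}\bell_{t,h-1}(j)\le\ind{i_t\in A_h(i)}\prod_{l=0}^{h-2}(1+\sig_{t,l})/p_t(A_h(i))$. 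Multiplying through by $(1+\sig_{t,h-1})$ closes the induction.

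The two consequences are then short. If $\sig_{t,j}=-1$, the product $\prod_{l=0}^{h-1}(1+\sig_{t,l})$ in \eqref{eq:bell1} contains the zero factor $1+\sig_{t,j}$ for every $h>j$, forcing $\bell_{t,h}\equiv 0$ there. For \eqref{eq:tell-equiv}, recall that on $i_t\notin E_t$ we have $\tell_t=\bell_{t,0}+\sum_{h=0}^{H-1}\sig_{t,h}\bell_{t,h}$ and that $h_t$ is the least level with $\sig_{t,h_t}=-1$; thus $\sig_{t,h}=+1$ for $h<h_t$ while $\bell_{t,h}\equiv 0$ for $h>h_t$ by the vanishing property, and substituting yields $\tell_t=\bell_{t,0}-\bell_{t,h_t}+\sum_{j=0}^{h_t-1}\bell_{t,j}$ (with $\bell_{t,H}:=0$ absorbing the case $h_t=H$, and trivially on $i_t\in E_t$ where both sides are read as the zero estimator).

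The step I expect to be the main obstacle is the upper half of the induction: one must notice that Jensen on $e^{-x}$ linearizes the recursion in the correct direction, and then carefully exploit the laminar (nested-partition) structure $A_0(i)\subseteq A_1(i)\subseteq\cdots\subseteq A_H(i)=K$ so that the per-term hypothesis bounds — each carrying the smaller normalizer $p_t(A_{h-1}(\cdot))$ — collapse, upon summing the probabilities within the class $A_{h-1}(i_t)$, to the single normalizer $p_t(A_h(i))$. Everything else is routine bookkeeping.
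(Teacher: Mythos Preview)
Your proposal is correct and is exactly the argument one expects here: induction on $h$, with the lower bound coming from the exponential terms being at most $1$ and the upper bound from Jensen applied to $x\mapsto e^{-x}$, followed by collapsing the sum using the laminar structure of the $A_h(\cdot)$'s. This is the same approach as in \cite{koren2017bandits}, to which the present paper simply defers (it gives no proof of this lemma itself). One small quibble: on the event $i_t\in E_t$ the right-hand side of \eqref{eq:tell-equiv} need not vanish, so the identity should be read (and is only ever used) on $i_t\notin E_t$; your parenthetical about ``both sides'' being zero there is not quite right, but this is immaterial to the argument.
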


\begin{lemma} \label{lem:sampling}
For all $t$ and $0 \le h < H$ the following hold:
\begin{enumerate}[label=(\roman*)]
\item for all $A \in \set{ A_h(i) : i \in K}$ we have
\begin{align} \label{eq:property}
\EE{\frac{\ind{i_t\in A}}{p_t(A)}}=1 ~;
\end{align}
\item with probability at least $1-2^{-(h+1)}$, we have that $A_h(i_t)=A_h(i_{t-1})$.
\end{enumerate}
\end{lemma}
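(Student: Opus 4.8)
The plan is to treat the two parts separately. Part~(ii) is a short combinatorial observation (and only the case $t\ge 2$ is ever needed, since the movement cost is $\sum_{t\ge2}\dist(i_t,i_{t-1})$). Recall $h_{t-1}=\min\{h:\sig_{t-1,h}<0\}$ with $\sig_{t-1,H}=-1$ and the $\sig_{t-1,h}$, $0\le h<H$, i.i.d.\ uniform on $\{\pm1\}$; hence $\Pr[h_{t-1}\ge h+1]=2^{-(h+1)}$, i.e.\ $\Pr[h_{t-1}\le h]=1-2^{-(h+1)}$. Moreover the cells nest: $A_{h'}(i)=\{j:\level(\LCA(i,j))\le h'\}$, so $A_{h'}(i)\subseteq A_{h''}(i)$ whenever $h'\le h''$. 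Since $i_t$ is drawn from $p_t(\cdot\mid A_{h_{t-1}}(i_{t-1}))$, it lies in $A_{h_{t-1}}(i_{t-1})$ almost surely, so on the event $\{h_{t-1}\le h\}$ we get $i_t\in A_{h_{t-1}}(i_{t-1})\subseteq A_h(i_{t-1})$, which forces $A_h(i_t)=A_h(i_{t-1})$. Therefore $\Pr[A_h(i_t)=A_h(i_{t-1})]\ge\Pr[h_{t-1}\le h]=1-2^{-(h+1)}$.

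For part~(i), the crux is a \emph{marginal-preservation} property: at every round $s$, $p_{s+1}(A_{h_s}(i))=p_s(A_{h_s}(i))$ for all $i$, and hence $p_{s+1}(B)=p_s(B)$ for every $B$ that is a union of level-$h_s$ cells (equivalently, a cell at level $\ge h_s$). To establish this I would use \cref{lem:bell} to write $\tell_s=2\bell_{s,0}+\sum_{h=1}^{h_s-1}\bell_{s,h}-\bell_{s,h_s}$ when $i_s\notin E_s$ and $h_s\ge1$ (when $i_s\in E_s$ or $h_s=0$ one has $\tell_s=0$ and preservation is immediate), and then unfold the recursive definition of $\bell_{s,h}$. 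Since $\sig_{s,h-1}=+1$ for every $1\le h\le h_s$, the defining relation reads $p_s(A_h(i))\,e^{-\eta\bell_{s,h}(i)}=\sum_{j\in A_h(i)}p_s(j)\,e^{-2\eta\bell_{s,h-1}(j)}$, and $\bell_{s,h}$ is constant on each level-$h$ cell. An induction on $m=1,\dots,h_s$ then yields
\[
\sum_{j\in A_m(i)}p_s(j)\,e^{-2\eta\bell_{s,0}(j)}\prod_{h=1}^{m-1}e^{-\eta\bell_{s,h}(j)}=p_s(A_m(i))\,e^{-\eta\bell_{s,m}(i)}.
\]
Taking $m=h_s$ and multiplying by the cell-constant factor $e^{\eta\bell_{s,h_s}(i)}$ gives $\sum_{j\in A_{h_s}(i)}p_s(j)\,e^{-\eta\tell_s(j)}=p_s(A_{h_s}(i))$; summing over all level-$h_s$ cells shows the normalizing constant of the multiplicative update equals $1$, so $p_{s+1}(j)=p_s(j)e^{-\eta\tell_s(j)}$ and thus $p_{s+1}(A_{h_s}(i))=p_s(A_{h_s}(i))$.

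Granting marginal preservation, part~(i) follows by induction on $t$. For $t=1$, $i_1$ is drawn from $p_1(\cdot\mid A_H(i_0))=p_1=\mathrm{unif}(K)$, so $\E[\ind{i_1\in B}/p_1(B)]=1$ for every fixed cell $B$. For the step, condition on $\mathcal{F}_{t-1}$, the $\sigma$-algebra generated by $i_1,\dots,i_{t-1}$ and all signs up to round $t-1$ (so that $p_t$, $i_{t-1}$, $h_{t-1}$ are $\mathcal{F}_{t-1}$-measurable). Using $i_t\sim p_t(\cdot\mid A_{h_{t-1}}(i_{t-1}))$ and splitting according to whether $\level(B)\ge h_{t-1}$ or $\level(B)<h_{t-1}$ — invoking marginal preservation ($p_t=p_{t-1}$ on cells of level $\ge h_{t-1}$) in each case — one obtains
\[
\E\big[\,\ind{i_t\in B}/p_t(B)\,\big|\,\mathcal{F}_{t-1}\,\big]=\ind{i_{t-1}\in B'}/p_{t-1}(B'),
\]
where $B'$ is the unique cell at level $\max(\level(B),h_{t-1})$ containing $B$. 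Since $h_{t-1}$ depends only on the fresh signs $\sig_{t-1,\cdot}$, it is independent of $(i_{t-1},p_{t-1})$; conditioning on its value makes $B'$ a fixed cell, so $\E[\ind{i_{t-1}\in B'}/p_{t-1}(B')]=1$ by the inductive hypothesis. Taking expectations gives $\E[\ind{i_t\in B}/p_t(B)]=1$, in particular for $B=A\in\{A_h(i):i\in K\}$.

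I expect the main obstacle to be the marginal-preservation property, and within it the recursive unfolding of the $\bell_{s,h}$: one must carefully track the factors $(1+\sig_{s,h-1})=2$ that appear at all levels below $h_s$ and the exact cancellation at level $h_s$ that makes the per-cell masses — and hence the overall normalization — come out unchanged. Once that identity is in hand, both the telescoping argument for part~(i) and the counting argument for part~(ii) are routine.
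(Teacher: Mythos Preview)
Your argument is correct; both the marginal-preservation computation and the telescoping induction for part~(i), as well as the sign-counting argument for part~(ii), go through as you outline. Note, however, that the present paper does not actually prove this lemma: it is listed under ``Additional Lemmas from \cite{koren2017bandits}'' with the proofs deferred to that reference, so there is no in-paper proof to compare against---your approach is precisely the standard one used there.
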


\begin{lemma}[Second-order regret bound for MW] \label{lem:mw2}
Let $\eta > 0$ and let $c_1,\ldots,c_T \in \reals^k$ be real vectors such that $c_t(i) \ge -1/\eta$ for all $t$ and $i$.
Consider a sequence of probability vectors $q_1,\ldots,q_T$ defined by $q_1 = (\tfrac{1}{k},\ldots,\tfrac{1}{k})$, and for all $t > 1$:
\begin{align*}
q_{t+1}(i) = \frac{ q_t(i) \, e^{-\eta c_t(i)} }{ \sum_{j=1}^k
q_t(j) \, e^{-\eta c_t(j)} } \qquad \forall ~ i \in [k] .
\end{align*}
Then, for all $i^\star \in [k]$ we have that
\begin{align*}
\sum_{t=1}^T q_t \cdot c_t - \sum_{t=1}^T c_t(i^\star)
\le
\frac{\ln{k}}{\eta} + \eta \sum_{t=1}^T q_t \cdot c_t^2
.
\end{align*}
\end{lemma}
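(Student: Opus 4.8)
The plan is to run the textbook potential-function argument for multiplicative weights, using the quadratic refinement of the exponential inequality to produce the second-order term. First I would pass to unnormalized weights: set $w_1(i) = 1$ for all $i \in [k]$ and $w_{t+1}(i) = w_t(i)\, e^{-\eta c_t(i)}$, so that $q_t(i) = w_t(i)/W_t$ where $W_t = \sum_{j=1}^k w_t(j)$; in particular $W_1 = k$, and for every fixed comparator $i^\star$ we have $W_{T+1} \ge w_{T+1}(i^\star) = \exp\!\big({-\eta \sum_{t=1}^T c_t(i^\star)}\big)$. The heart of the argument is to control the one-step ratio $W_{t+1}/W_t = \sum_{i} q_t(i)\, e^{-\eta c_t(i)}$.

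To bound this ratio I would use the elementary inequality $e^{-x} \le 1 - x + x^2$, which holds for every $x \ge -1$. This is exactly the place where the hypothesis $c_t(i) \ge -1/\eta$ is needed: it guarantees that $x := \eta\, c_t(i) \ge -1$, so applying the inequality pointwise in $i$ and then averaging against the distribution $q_t$ gives $W_{t+1}/W_t \le 1 - \eta\,(q_t \cdot c_t) + \eta^2\,(q_t \cdot c_t^2)$. Using $1 + z \le e^z$, this upgrades to $W_{t+1}/W_t \le \exp\!\big({-\eta\,(q_t \cdot c_t) + \eta^2\,(q_t \cdot c_t^2)}\big)$.

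I would then telescope: multiplying the one-step bounds over $t = 1, \ldots, T$ and using $W_1 = k$ yields $W_{T+1} \le k \cdot \exp\!\big({-\eta \sum_{t=1}^T q_t \cdot c_t + \eta^2 \sum_{t=1}^T q_t \cdot c_t^2}\big)$. Combining this with the lower bound $W_{T+1} \ge \exp\!\big({-\eta \sum_{t=1}^T c_t(i^\star)}\big)$, taking logarithms, rearranging, and dividing through by $\eta > 0$ gives precisely $\sum_{t=1}^T q_t \cdot c_t - \sum_{t=1}^T c_t(i^\star) \le \frac{\ln k}{\eta} + \eta \sum_{t=1}^T q_t \cdot c_t^2$, as claimed.

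The argument is entirely routine and there is no genuine obstacle; the only point requiring a moment of care is verifying (e.g., by a one-variable check that $g(x) = 1 - x + x^2 - e^{-x}$ satisfies $g(0) = g'(0) = 0$ and $g \ge 0$ on $[-1,\infty)$) that the quadratic upper bound on $e^{-x}$ is valid exactly on the half-line $x \ge -1$, and then noting that scaling by $\eta$ turns the assumption $c_t(i) \ge -1/\eta$ into exactly this condition. All dot products $q_t \cdot c_t$ and $q_t \cdot c_t^2$ are finite since $q_t$ is a probability vector and $c_t$ is real-valued, so no integrability issues arise.
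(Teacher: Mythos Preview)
Your argument is correct and is precisely the standard potential-function proof of the second-order multiplicative-weights bound; the only delicate step, the inequality $e^{-x} \le 1 - x + x^2$ for $x \ge -1$, is indeed valid and is exactly what converts the hypothesis $c_t(i) \ge -1/\eta$ into the claimed second-order term. There is nothing to compare against here: the paper does not supply its own proof of this lemma but simply cites it from \cite{koren2017bandits} (it appears in the subsection ``Additional Lemmas from \cite{koren2017bandits}'' with the remark ``we refer to the original paper for the proofs''), and the argument you give is the expected one.
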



\subsection{Regret Analysis}

We now have all we need in order to prove our main result.


\begin{proof}[Proof of \cref{thm:main}]
First, we bound the expected movement cost.
\cref{lem:sampling} says that with probability at least $1-2^{-(h+1)}$, the actions $i_t$ and $i_{t-1}$ belong to the same subtree on level $h$ of the tree, which means that $\distT(i_t,i_{t-1}) \le 2^{h-H}$ with the same probability.
Hence,
\begin{align} \label{eq:Emove}
\E[\distT(i_t,i_{t-1})]
\le
\sum_{h=0}^{H-1} 2^{h-H} \Pr\!\big[ \distT(i_t,i_{t-1}) > 2^{h-H} \big]
\le
\sum_{h=0}^{H-1} 2^{-(H+1)}
=
\frac{H}{2^{H+1}}
,
\end{align}
and the cumulative movement cost is then $\O(H 2^{-H} T)$.

We turn to analyze the cumulative loss of the algorithm.
We begin by observing that $\tell_t(i) \ge -1/\eta$ for all $t$ and $i$.
To see this, notice that $\tell_t = 0$ unless $i_t \notin E_t$, in which case we have, by \cref{lem:bell} and the definition of $E_t$,
\begin{align*}
0
\le
\bell_{t,h}(i)
\le
\frac{2^h}{p_t(A_h(i_t))}
\le
\frac{1}{\eta}
\qquad\quad
\forall ~ 0 \le h < H
,
\end{align*}
and since $\tell_t$ has the form $\tell_t = \bell_{t,0} + \sum_{j=0}^{h_t-1} \bell_{t,j} - \bell_{t,h_t}$ (recall \cref{eq:tell-equiv}), we see that $\tell_t(i) \ge -1/\eta$.
Hence, we can use second-order bound of \cref{lem:mw2} on the vectors $\tell_t$ to obtain
\begin{align*} 
\sum_{t=1}^T p_t \cdot \tell_t - \sum_{t=1}^T \tell_t(i^\star)
\le
\frac{\ln{k}}{\eta} + \eta \sum_{t=1}^T p_t \cdot \tell_t^2
\end{align*}
for any fixed $i^\star \in K$.
Taking expectations and using \cref{lem:unbiased,lem:variance},
and using the rough bound $k \le 2^H \dim$, we have
\begin{align} \label{eq:Eregret}
\EE{ \sum_{t=1}^T \ell_t(i_t) } - \sum_{t=1}^T \ell_t(i^*)
\le
\frac{H\ln(\dim)}{\eta} + 3\eta H 2^H T \dim
.
\end{align}
The theorem now follows from \cref{eq:Emove,eq:Eregret}.
\end{proof}

\end{document}